\newtheorem{assumption}{Assumption}{}
\newtheorem{lemma}{Lemma}{}
\newtheorem{theorem}{Theorem}{}
{}
{}
\begin{document}

\title{Large-scale Traffic Signal Control Using a Novel Multi-Agent Reinforcement Learning}
\author{Xiaoqiang~Wang,
        Liangjun~Ke,~\IEEEmembership{Member,~IEEE,}
        Zhimin~Qiao,
        and~Xinghua~Chai 
\thanks{X. Wang, L. Ke and Z. Qiao are with
 State Key Laboratory for Manufacturing Systems Engineering, School of Automation Science and Engineering,
Xi'an Jiaotong University, Xi'an, Shaanxi, 710049, China. (e-mail: wangxq5127@stu.xjtu.edu.cn; keljxjtu@xjtu.edu.cn; qiao.miracle@gmail.com)(Corresponding author: Liangjun~Ke.).}
\thanks{X. Wang and X. Chai are with CETC Key Laboratory of Aerospace Information Applications, Shijiazhuang, Hebei, China (e-mail: cetc54008@yeah.net).} }

\maketitle
\thispagestyle{fancy}         
\fancyhead{}
\fancyhead[C]{\scriptsize{This is the author's version of an article that has been published in IEEE Transaction on Cybernetics. Changes were made to this version by the publisher prior to publication.
The final version of record is available at http://dx.doi.org/10.1109/TCYB.2020.3015811}}
\fancyfoot[C]{\scriptsize{Copyright (c) 2020 IEEE. Personal use is permitted. For any other purposes, permission must be obtained from the IEEE by emailing pubs-permissions@ieee.org.}\\ \rfoot{\scriptsize{\text{Page} \thepage}}}

\begin{abstract}
Finding the optimal signal timing strategy is a difficult task for the problem of large-scale traffic signal control (TSC). Multi-Agent Reinforcement Learning (MARL) is a promising method to solve this problem.
However,  there is still room for improvement in extending to large-scale problems and modeling the behaviors of other agents for each individual agent.
In this paper, a new MARL, called  \emph{Cooperative double Q-learning} (Co-DQL), is proposed, which has several prominent features. It uses a highly scalable independent double Q-learning method based on double estimators and the upper confidence bound (UCB) policy, which can eliminate the over-estimation problem existing in traditional independent Q-learning while ensuring exploration. It uses mean field approximation to model the interaction among agents, thereby making agents learn a better cooperative strategy.
In order to improve the stability and  robustness of the learning process, we introduce a new reward allocation mechanism and a local state sharing method.
In addition, we analyze the convergence properties of the proposed algorithm.
Co-DQL is applied to TSC and tested on various traffic flow scenarios of TSC simulators. The results show that Co-DQL outperforms the state-of-the-art decentralized MARL algorithms in terms of multiple traffic metrics.
\end{abstract}

\begin{IEEEkeywords}
Traffic signal control, mean field approximation, multi-agent reinforcement learning, double estimators.
\end{IEEEkeywords}

\IEEEpeerreviewmaketitle

\section{Introduction}

\IEEEPARstart{T}{raffic} congestion is becoming a great puzzling problem in urban, mainly due to the difficulty of effective utilization of limited road resources  (e.g. road width). By regulating   traffic flow of road network, the traffic signal control (TSC) at intersections plays an important role in utilizing the road resources and helping  to  reduce traffic congestion \cite{yau2017survey}.

Many researchers have devoted efforts to TSC, with the aim of minimizing the average waiting time in the whole traffic system  and maximizing social welfare \cite{wu2004optimal}.
When traffic signals  are large-scale, the traditional control methods such as pre-timed \cite{yin2016traffic} and actuated control systems \cite{koonce2008traffic} may fail to deal with the dynamic of the traffic conditions or lack the ability to foresee traffic flow.
Intelligent computing methods (such as genetic algorithm \cite{ceylan2004traffic}, swarm intelligence \cite{garcia2012swarm}, neuro-fuzzy networks \cite{qiao2010two} \cite{srinivasan2006neural}), however, in many cases, suffer from a slow convergence rate.
Reinforcement learning (RL) \cite{sutton2018reinforcement} is a promising adaptive decision-making method in many fields. It has been applied to cope with TSC \cite{wiering2004intelligent} \cite{prashanth2010reinforcement}. It can not only make real-time decisions according to traffic flow, but also predict future traffic flow. Especially in recent years, RL has made tremendous progress which significantly attributes to the success of deep learning \cite{lecun2015deep}.
By using deep neural network to approximate the value function or action-value function (such as DQN \cite{mnih2015human}, DDPG \cite{lillicrap2015continuous}), RL can be adapted to the problems with large-scale state space or action space.

As for TSC with multiple signalized intersections, a straightforward idea is centralized, in which TSC is considered as a single-agent learning problem \cite{ceylan2004traffic} \cite{wei2018intellilight}. However, a centralized approaches often need to collect all traffic data in the network as the global state \cite{casas2017deep}, which may lead to high latency and failure rate. In addition, as the number of intersections increases, the joint state space and action space of the agent will increase exponentially to a large extent, which incurs the curse of dimension. Consequently, a centralized method often requires very heavy computational and communication burden.

An alternative way is multi-agent reinforcement learning (MARL) in which each  signalized intersection is regarded as an agent. A challenge of a MARL approach is how to response to the dynamic interaction between each signal agent and the environment, which significantly affects the adaptive decision-making of other signals \cite{claus1998dynamics}.
Moreover, most of the current MARL methods are only studied on very limited-size traffic network problems \cite{shamshirband2012distributed} \cite{arel2010reinforcement}. However, in urban traffic systems, it is often necessary to consider all the signals in a global coordination manner.
In \cite{abdoos2011traffic} \cite{tan1993multi}, each signal is regarded as an independent agent for training.
Although this class of approaches can easily be extended to large-scale scenarios, they directly ignore the actions of other agents in the road network system and implicitly suggest that the environment is static. This makes it difficult for agents to learn favorable strategies with convergence guarantee.
In \cite{kuyer2008multiagent}, a max-plus method is proposed to deal with large-scale TSC problem, but this approach requires additional computation during execution.
Multi-agent A2C \cite{chu2019multi} is developed from IA2C which is scalable and belongs to a decentralized MARL algorithm, but it may be uneasy to determine the appropriate attenuation factor to weaken the state and reward information from other agents.

In this work, we present a decentralized and scalable MARL method which is named after  Cooperative double Q-learning (Co-DQL) and apply it to TSC.
The new approach adopts a highly scalable independent double Q-learning method, with the aim of avoiding the problem of over-estimation suffered from traditional independent Q-learning \cite{hasselt2010double}. At the mean time, it can ensure exploration by using the upper confidence bound (UCB) \cite{auer2002finite} rule.
In order to make agents learn a better cooperative strategy for large-scale problems, it employs mean field theory \cite{stanley1971phase}, which  has been studied in \cite{yang2018mean}.  It approximately  treats the interactions within the population of  agents as the interaction between a single agent and a virtual  agent averaged by other individuals, which potentially transmits the action information among all  agents in the environment.
Furthermore, we introduce a new reward allocation mechanism and a local state sharing method to make the learning process of agents more stable and robust.
To theoretically support the effectiveness of the proposed algorithm, we  provide the convergence proof for the proposed algorithm under some mild conditions.
Numerical experiment is performed on various traffic flow scenarios of TSC simulators. The empirical results show that the proposed method outperforms several state-of-the-art decentralized MARL algorithms  in terms of multiple traffic metrics.

The paper is organised into six sections. Section~\ref{sec2} describes the background on RL. Section~\ref{sec3} presents the proposed method and analyzes the convergence properties. Section~\ref{sec4} introduces the application of Co-DQL to TSC problem. Section~\ref{sec5} describes the setup and conditions of the experiments in detail, and makes a comparative analysis and discussion on the experimental results. Section~\ref{sec6} summarises  this paper.

\section{Background on Reinforcement Learning}\label{sec2}
\subsection{Single-Agent RL}\label{subsec2.1}
Q-learning is one of the most popular RL methods and it solves   sequential decision-making problems by learning estimates for the optimal value of each action. The optimal value can be expressed as $Q^{*}(s, a)=\max _{\pi} Q^{\pi}(s, a)$. However, it is not easy to learn the values of all the actions in all states when the state space or action space is larger. In this case, we can learn a parameterized action-value function $Q(s, a; \boldsymbol{\theta})$. When taking action $a_t$ in state $s_t$ and observing the immediate reward $r_{t+1}$ and resulting state $s_{t+1}$ , the standard Q-learning updates the parameters as follows:
\begin{equation}\label{eq104}
\boldsymbol{\theta}_{t+1}=\boldsymbol{\theta}_{t}+\alpha\left(Y_{t}^{\mathrm{Q}}-Q(s_{t}, a_{t} ; \boldsymbol{\theta}_{t})\right) \nabla_{\boldsymbol{\theta}_{t}} Q(s_{t}, a_{t} ; \boldsymbol{\theta}_{t}),
\end{equation}
where $t$ is the time step, $\alpha$ is the learning rate and the target $Y^Q_t$ is defined as:
\begin{equation}\label{eq105}
Y_{t}^{\mathrm{Q}} \equiv r_{t+1}+\gamma \max _{a} Q(s_{t+1}, a; \boldsymbol{\theta}_{t}),
\end{equation}
where the constant $\gamma\in[0, 1)$ is the discount factor that trades off the importance of immediate and later rewards. After updating gradually, it can converge to optimal action-value function.

Note that Q-learning approximates the value of the next state by maximizing over the estimated action values in the corresponding state, namely, $\max _{a} Q_{t}(s_{t+1}, a; \boldsymbol{\theta}_{t})$ and it is an estimate of $E\left\{\max _{a} Q_{t}\left(s_{t+1}, a; \boldsymbol{\theta}_{t}\right)\right\}$, which in turn is used to approximate $\max _{a} E\left\{Q_{t}\left(s_{t+1}, a; \boldsymbol{\theta}_{t}\right)\right\}$. This method of approximating the maximum expected value has a positive deviation \cite{hasselt2010double} \cite{van2016deep} \cite{smith2006optimizer}, which leads to over-estimation of the optimal value and may damage the performance.

\subsection{Multi-Agent RL}\label{subsec2.2}
The single-agent RL is based on Markov decision process (MDP) theory, while for MARL, it mainly stems from Markov game \cite{shapley1953stochastic}, which generalizes the MDP and was proposed as the standard framework for MARL \cite{littman1994markov}.

We can use a tuple to formalize Markov game, namely $(N, \boldsymbol{S}, \boldsymbol{A}_{1,2,\ldots,N}, r_{1,2,\ldots,N}, p)$, where $N$ being the number of agents in the game system, $\boldsymbol{S} =\{\boldsymbol{s}_{1}, \ldots, \boldsymbol{s}_{n}\}$ is a finite set of system states, $n$ being the number of states in the system, $\boldsymbol{A}_{k}$ is the action set of agent $k\in\{1,\ldots,N\}$; $r_{k} : \boldsymbol{S} \times \boldsymbol{A}_{1} \times \ldots \times \boldsymbol{A}_{N} \times \boldsymbol{S} \rightarrow \mathbb{R}$ is the reward function of agent $k$, determining the immediate reward, $p : \boldsymbol{S} \times \boldsymbol{A}_{1} \times \ldots \times \boldsymbol{A}_{N} \rightarrow \mu(\boldsymbol{S})$ is the transition function. Each agent has its own strategy and chooses actions according to its strategy. Under the joint strategy $\boldsymbol{\pi}\triangleq(\pi_{1}, \ldots, \pi_{N})$, at each time step, the system state is transferred by taking the joint action $\boldsymbol{a}=(a_{1}, \ldots, a_{N})$ selected according to the joint strategy and each agent receives the immediate reward as the consequence of taking the joint action. To measure the performance of a strategy, either the future discounted reward or the average reward over time can be used, depending on the policies of other agents. This results in the following definition for the expected discounted reward for agent $k$ under a joint policy $\boldsymbol{\pi}$ and initial state $\boldsymbol{s}(0)= \boldsymbol{s} \in \boldsymbol{S}$:
\begin{equation}\label{eq11}
V_{k}^{\boldsymbol{\pi}}(s)=\mathrm{E}^{\boldsymbol{\pi}}\left\{\sum_{t=0}^{\infty} \gamma^{t} r_{k}(t+1) | \boldsymbol{s}(0)=\boldsymbol{s}\right\},
\end{equation}
while the average reward for agent $k$ under this joint policy is defined as:
\begin{equation}\label{eq22}
J_{k}^{\boldsymbol{\pi}}(s)=\lim_{T \rightarrow \infty} \frac{1}{T} \mathrm{E}^{\boldsymbol{\pi}}\left\{\sum_{t=0}^{T} r_{k}(t+1) | \boldsymbol{s}(0)=\boldsymbol{s}\right\}.
\end{equation}

On the basis of Eq. (\ref{eq11}) (the most used form), the action-value function $Q_{k}^{\boldsymbol{\pi}} : \boldsymbol{S} \times \boldsymbol{A}_{1} \times \ldots \times \boldsymbol{A}_{N} \rightarrow \mathbb{R}$ of agent $k$ under the joint strategy $\boldsymbol{\pi}$ can be written as follows according to Bellman equation:
\begin{equation}\label{eq33}
Q_{k}^{\boldsymbol{\pi}}(s,\boldsymbol{a})=r_{k}(s,\boldsymbol{a})+\gamma \mathrm{E}_{s^{\prime}\sim p}\left[V_{k}^{\boldsymbol{\pi}}(s^{\prime})\right],
\end{equation}
where $V_{k}^{\boldsymbol{\pi}}(s)=\mathbb{E}_{\boldsymbol{a} \sim \boldsymbol{\pi}}\left[Q_{k}^{\boldsymbol{\pi}}(s, \boldsymbol{a})\right]$ and $s^{\prime}$ is the system state at the next time step.
The commonly used MARL methods are generally based on Q-learning. The general multi-agent Q-learning framework is shown in Algorithm 1.
\begin{algorithm}[!t]
\caption{general multi-agent Q-learning framework}
\LinesNumbered
\KwIn{Initial Q value of all state-action pairs for each agent $k$}
\KwOut{Convergent Q value for each agent $k$}
 Initialize $Q_{k}(s,\boldsymbol{a})=0, \quad \forall s, \boldsymbol{a}, k$ \;
\While{not termination condition}{
    \For{all agents $k$}{
        select action $a_{k}$
    }
    execute joint action $\boldsymbol{a}=(a_{1}, \ldots a_{N})$\;
    observe new state $s^{\prime},$ rewards $r_{k}$\;
    \For{all agents $k$}{
        $Q_{k}(s,\boldsymbol{a})\! =\! (1-\alpha)Q_{k}(s,\boldsymbol{a})\! +\! \alpha\left[r_{k}(s,\boldsymbol{a})\! +\! \gamma V_{k}(s^{\prime})\right]$
    }}
\end{algorithm}

MARL enables each agent to learn the optimal strategy to maximize its cumulative reward. However, the value function of each agent is related to the joint strategy $\boldsymbol{\pi}$ of all agents, so it is in general impossible for all players in a game to maximize their payoff simultaneously.
For MARL, an important solution concept is \emph{Nash equilibrium}. Given these opponent strategies, the \emph{best response} of agent $k$ to a vector of opponent strategies is defined as the strategy $\pi_{k}^{*}$ that achieves the maximum expected reward, which is given as follows:
\begin{equation}\label{eq44}
\mathrm{E}\left\{r_{k} | \pi_{1}, \ldots, \pi_{k}, \ldots, \pi_{N}\right\}\! \leq \!\mathrm{E}\left\{r_{k} | \pi_{1}, \ldots, \pi_{k}^{*}, \ldots, \pi_{N}\right\}, \forall \pi_{k}.
\end{equation}
Then the \emph{Nash equilibrium} is represented by a joint strategy $\boldsymbol{\pi}^{*}\triangleq \left(\pi_{1}^{*}, \ldots, \pi_{N}^{*}\right)$ in which each agent acts with the \emph{best response} $\pi_{k}^{*}$ to others and all other agents follow the joint policy $\boldsymbol{\pi}_{-k}^{*}$ of all agents except $k$, where the joint policy $\boldsymbol{\pi}_{-k}^{*}\triangleq\left(\pi_{1}^{*},\ldots,\pi_{k-1}^{*},\pi_{k+1}^{*},\ldots, \pi_{N}^{*}\right)$. In this case, as long as all other agents keep their policies unchanged, no agent can benefit by changing its policy.
Many MARL algorithms reviewed strive to converge to \emph{Nash equilibrium}. In addition, the Q-function will eventually converge to the \emph{Nash Q-value} $\boldsymbol{Q}^{*}=(Q_{1}^{*},\ldots, Q_{N}^{*})$ received in a \emph{Nash equilibrium} of the game.

\section{Description of The Proposed Method}\label{sec3}
Co-DQL is developed from a new algorithm, called independent double Q-learning method,  which is also firstly proposed in this paper. In the following, we first present the independent double Q-learning method, and then introduce Co-DQL, finally, we analyze its convergence properties.

\subsection{Independent Double Q-learning Method}\label{subsec3.1}
Most MARL methods are based on Q-learning.
However, as described in Section~\ref{subsec2.1}, traditional RL methods cause the problem of over-estimation, which to some extent harms the performance of RL methods.
In \cite{hasselt2010double}, a double Q-learning algorithm is proposed, which uses double estimators instead of $\max _{a} Q_{t}\left(s_{t+1}, a \right)$ to approximate $\max _{a} E\left\{Q_{t}\left(s_{t+1}, a \right)\right\}$,
 which is helpful to avoid the problem of over-estimation in standard Q-learning.

Inspired by independent Q-learning \cite{tan1993multi}, we develop an independent double Q-learning method based on the UCB rule. For each agent $k$, it is associated with two different action-value functions, each of which is updated with a value from the other action-value function for the next state. More specifically, suppose that the two action-value functions are $Q^\mathfrak{a}_k$ and $Q^\mathfrak{b}_k$, and one of them is randomly selected for updating each time. The updating process of the action-value function $Q^\mathfrak{a}_k$ is as follows. Firstly, the maximal valued action $a^{*}_k$ in the next state $s^{\prime}$ is selected according to the action-value function $Q^\mathfrak{a}_k$, namely, $a^{*}_k= \operatorname{argmax} _{a} Q^\mathfrak{a}_k\left(s^{\prime}, a\right)$. Then we use the value $Q^\mathfrak{b}_k\left(s^{\prime}, a^{*}_k\right)$ to update $Q^\mathfrak{a}_k$:
\begin{equation}\label{eq106}
Q^\mathfrak{a}_k(s, a) \leftarrow Q^\mathfrak{a}_k(s, a)+\alpha\left(r_k+\gamma Q^\mathfrak{b}_k\left(s^{\prime}, a^{*}_k\right)-Q^\mathfrak{a}_k(s, a)\right),
\end{equation}
instead of using the value $Q^{\mathfrak{a}}_k\left(s^{\prime}, a^{*}_k\right)=\max _{a} Q^{\mathfrak{a}}_k \left(s^{\prime}, a\right)$ to update $Q^\mathfrak{a}_k$ in independent Q-learning. The updating of $Q^\mathfrak{b}_k$ is similar to this.

Here two multi-layer neural networks are used to fit the two Q functions, which are expressed as $Q^\mathfrak{a}_k\left(s, a; \boldsymbol{\theta}_{t} \right)$ and $Q^\mathfrak{b}_k\left(s, a; \boldsymbol{\theta}_{t}^{\prime} \right)$ respectively. Usually the latter is called target Q-function (or target network). The update mode is similar to the one of deep double Q-learning \cite{van2016deep} and the target value $Y_{k,t} \equiv r_{k, t+1}+\gamma Q^\mathfrak{b}_k(s_{t+1}, \operatorname{argmax}_a Q^\mathfrak{a}_k\left(s_{t+1}, a; \boldsymbol{\theta}_{t}\right), \boldsymbol{\theta}_{t}^{\prime})$.
In order to make the target network update smoother, we adopt the soft target update \cite{lillicrap2015continuous} instead of copying the network weights directly \cite{mnih2015human}:
\begin{equation}\label{addeq66}
\boldsymbol{\theta}^{\prime} \longleftarrow \tau \boldsymbol{\theta}+(1-\tau) \boldsymbol{\theta}^{\prime},
\end{equation}
where $\tau \ll 1$. The soft update method makes the weights of the target Q-function change slowly, so does the target values. Compared with the direct copy of the weights, the soft update method can enhance the learning stability \cite{mnih2015human}.

To balance exploration and exploitation, the UCB exploration strategy is used to  select an action to be performed by the agent $k$:
\begin{equation}\label{eq77}
a_k = \underset{c \in \boldsymbol{A}_k}{\operatorname{argmax} }\left\{Q_{k}(s_{k}, c)+ \sqrt{\frac{\ln R_{s_{k}}}{R_{s_{k}, c}}}\right\},
\end{equation}
where $R_{s_{k}}$ denotes the number of times  state $s_{k}$ has been visited and $R_{s_{k}, c}$ denotes the number of times   action $c$ has been chosen in this state until now. If action $c$ has been chosen rarely in some states, then the second term will dominate the first term and action $c$ will be explored. As learning progresses, the first term dominates the second term and the UCB strategy  ultimately becomes a greedy one.
Although $\epsilon$-greedy strategy is easier to implement for problems with larger state space, we prefer the UCB strategy if possible, since in the preliminary test we observe that the UCB strategy is slightly better than the $\epsilon$-greedy strategy \cite{prabuchandran2014multi}. From the perspective of exploration mechanism, the exploratory action selection for UCB is based on both the learnt Q-values and the number of times an action has been chosen in the past, hence it tends to be more inclined to explore those actions that are rarely explored.

In this method, agent $k$ just regards other agents as a part of the environment.
Therefore, this method ignores the dynamic resulting from the actions of the other agents  and the convergence is not guarantee.
In order to learn a better cooperative strategies and make learning process more stable and robust, we introduce Co-DQL, which uses mean field approximation, a new reward allocation mechanism and local state sharing method.

\subsection{Cooperative Double Q-learning Method }\label{subsec3.2}

With the number of agents increasing, the dimension of joint action $\boldsymbol{a}$ increases exponentially, so when the number of agents is relatively large, it is often not feasible to directly calculate the joint action function $Q_{k}(s, \boldsymbol{a})$ for each agent $k$.
Mean field approximation is first proposed in \cite{yang2018mean} to deal with the problem.
Its core idea is that the interactions within the population of agents are approximated by those between an agent and the average  of its neighboring agents \footnote{The neighborhood size is a user-specific parameter. It can take a value from [1, $N$] where $N$ is the total number of agents.}.
Specifically, a very natural approach is to decompose the joint action-value function as follows:
\begin{equation}\label{eq55}
Q_{k}(s_{k}, \boldsymbol{a})=\mathrm{E}_{l\sim d} \left[Q_{k}(s_{k}, a_{k}, a_{l})\right],
\end{equation}
where $d$ is the uniform distribution on the index set $\mathcal{N}(k)$ which is the set of the neighboring agents of agent $k$ and the size of the index set is $N_{k}=|\mathcal{N}(k)| $.
Suppose that each agent has $C$ discrete actions $\{1, 2, \ldots, C\}$.
Then the action $a_k$ of agent $k$ can be coded using one-hot, namely, $a_k \triangleq \left[a_{k,1},a_{k,2}, \ldots, a_{k,C}\right]$, where each component corresponds to a possible action, and obviously at any time only one component is one and the others are zero.
Hence the mean action $\overline{a}_{k}$ can be expressed as: $\overline{a}_{k} \triangleq\left[\overline{a}_{k,1},\overline{a}_{k,2}, \ldots, \overline{a}_{k,C}\right]$, where each component $\overline{a}_{k,i} = \mathrm{E}_{l\sim d} \left[a_{l,i}\right]$ for $i\in\{1, 2, \ldots, C\}$, simply recorded as $\overline{a}_{k}=\mathrm{E}_{l\sim d} \left[a_{l}\right]$. Intuitively, $\overline{a}_{k}$ can be seen as the empirical distribution of the actions taken by the neighbors of agent $k$ \cite{yang2018mean}.
Naturally, there is the following relationship between the one-hot coding action $a_{l}$ of   agent $l$ and the mean action:
\begin{equation}\label{eqadd1}
  a_{l}=\overline{a}_{k}+ \delta_{l, k},
\end{equation}
where $\delta_{l, k}$ is a small fluctuation.
Under the premise of twice-differentiable, using Taylor expansion theory, the mean field approximation is expressed by the following formulation on the basis of Eq.~\ref{eq55}:
\begin{equation}\label{eq66}
\begin{split}
  Q_{k}& (s_{k}, \boldsymbol{a}) =\mathrm{E}_{l\sim d} \left[Q_{k}(s_{k}, a_{k}, a_{l})\right] \\
  &=\mathrm{E}_{l\sim d} [Q_{k}(s_{k}, a_{k}, \overline{a}_{k})+\nabla Q_{k}(s_{k}, a_{k}, \overline{a}_{k})\cdot \delta_{l, k}\\
  &\quad +\frac{1}{2}\delta_{l, k}\cdot \nabla^{2} Q_{k}(s_{k}, a_{k}, \xi_{l,k})\cdot\delta_{l, k}]\\
  &=Q_{k}(s_{k}, a_{k}, \overline{a}_{k})+\nabla Q_{k}(s_{k}, a_{k}, \overline{a}_{k})\cdot \mathrm{E}_{l\sim d}[\delta_{l, k}]\\
  &\quad +\frac{1}{2}\mathrm{E}_{l\sim d}[\delta_{l, k}\cdot \nabla^{2} Q_{k}(s_{k}, a_{k}, \xi_{l,k})\cdot\delta_{l, k}]\\
  &=Q_{k}(s_{k}, a_{k}, \overline{a}_{k})+\frac{1}{2}\mathrm{E}_{l\sim d}[R_k(a_l)]\\
  & \approx Q_{k}(s_{k}, a_{k},\overline{a}_{k}),
\end{split}
\end{equation}
where $\mathrm{E}_{l\sim d}[\delta_{l, k}]=0$ is easily known from Eq.~\ref{eqadd1}, and $R_k(a_l)\triangleq \delta_{l, k}\cdot \nabla^{2} Q_{k}(s_{k}, a_{k}, \xi_{l,k})\cdot\delta_{l, k}$ denotes the Taylor polynomial's remainder with $\xi_{l,k}=\overline{a}_{k}+\epsilon_{l,k}\cdot\delta_{l, k}$ and $\epsilon_{l,k}\in[0,1]$ \cite{yang2018mean}. Under some mild conditions, it can be proved that $R_k(a_l)$ is a random variable close to zero and can be omitted\cite{yang2018mean}.
For large-scale TSC, this way of implicit modeling   the behavior of other agents has great advantages, which makes the input dimension of each agent $k$'s Q-function drastically reduce, and the joint action dimension decreases from $C^{N_k}$ to constant $C^2$. It is worth noting that we only need to pay attention to the actions of the current time step, rather than the historical  behavior of the neighbors. This is mainly due to the fact that the traffic state dynamics is Markovian, which will be further discussed in Section~\ref{subsec4.1}.

For partially observable Markov traffic scenarios, each agent $k$ can get its own reward $r_k$ and local observation $s_k$ at each time step.
The goal of MARL in cooperative situation  is to maximize the global benefits or minimize the regrets \footnote{In this paper, regrets refer to the waiting time of vehicles, the length of queues, etc.}. However, there may be the so-called credit assignment problem \cite{agogino2008analyzing}  in MARL, so each agent often does not directly regard the global reward as its reward. Instead, we set each agent to maintain its own reward. In addition, if each agent only considers its own immediate reward, then the agent may become selfish, which may be harmful to cooperation. Based on the above considerations, we propose to allocate each agent's reward according to the following formulation:
\begin{equation}\label{eq111}
\hat{r}_k = r_k + \alpha \cdot \sum_{i \in \mathcal{N}(k)}r_i,
\end{equation}
where $\alpha\in[0,1]$ is a discount factor that can be flexibly used to balance selfishness and cooperation.
If $\alpha$ is set to 0, then each signal agent only considers the immediate reward of its own intersection, greedily maximizing
the throughput of its own intersection, which may damage the global reward of the road network; if $\alpha$ is set to 1,
this means that each agent may get the global reward and suffers from credit assignment problem as described earlier.
Specifically, we make $0<\alpha<1$. The idea behind is as follows: For each signal agent $k$, despite the action
selection may be not always beneficial to the neighboring agents, the reallocated reward received after an action depends
on its own immediate reward and the immediate reward of the neighboring agents. Once the immediate rewards of the
neighboring agents are low, the second term of Eq.~\ref{eq111} will take a small value which means the action taken by signal
agent $k$ may be not so great for the neighboring agents. While higher immediate rewards of the neighboring agents
will encourage signal agent $k$ and accordingly the second term of Eq.~\ref{eq111} will take a larger value. This reward allocation
mechanism in Eq.~\ref{eq111} in turn affects the action selection of agent $k$, with the aim of maximizing the global reward of
the road network.
The reward allocation mechanism is similar to the one mentioned in  \cite{chu2019multi}, but we do not strictly limit the distance between agent $k$ and the neighboring agents.

The local state sharing method is described below. For agent $k$, the average of the local state of its neighboring agents is taken as the additional input of agent $k$'s action-value function. Hence, the state of   agent $k$ can be represented as:
\begin{equation}\label{eq112}
\hat{s}_k = \langle s_k, \frac{1}{N_{k}}\sum_{i \in \mathcal{N}(k)}s_i \rangle,
\end{equation}
where $\hat{s}_k$ represents agent $k$'s joint state.
This method implicitly shares state information among agents, and if the dimension of local state is assumed to be $|s|$, its joint dimension is constant $|s|^2$, which is independent of the number of agents.

Based on the above introduction, \emph{Cooperative double Q-learning} (Co-DQL) algorithm is proposed.
Compared with the centralized control method\cite{casas2017deep}\cite{lowe2017multi}, this algorithm reduces the joint input dimension of action-value function from $C^{N_k}\cdot|s|^{N_k}$ to $C^2\cdot|s|^2$ at the cost of a small amount of communication and calculation\cite{yang2018mean}, which avoids the  curse of dimension in large-scale problems.
The pseudo code of Co-DQL is given in Algorithm 2.
In this algorithm, multi-layer perceptions parameterized by $\phi$ and $\phi_{-}$ are used to represent the two action-value functions of each agent. Co-DQL works as follows:
\begin{description}
    \item [\textbf{Step 0}]\textbf{ Initialize:} For each $k=1, \ldots, N$, initialize neural network parameters $\phi_k$, $\phi_{-,k}$ and initialize mean action $\overline{a}_k$ for agent $k$.

    \item [\textbf{Step 1}]\textbf{ Check the termination condition:} If a problem-specific stopping condition is met, stop and save the training neural network model.

     \item[\textbf{Step 2}]\textbf{ Select action:} For each $k=1, \ldots, N$, according to the current observation $\hat{s}_k$ of agent $k$, select action $a_k$ under the UCB policy.

     \item[\textbf{Step 3}]\textbf{ Execute action:} For each $k=1, \ldots, N$, agent $k$ executes action $a_k$ (all agents execute action synchronously), gets immediate reward $r_k$ and next state observation $s^{\prime}_k$.
	
     \item[\textbf{Step 4}]\textbf{ Obtain samples:} For each $k=1, \ldots, N$, compute the mean action $\overline{a}_k$, reward $\hat{r}_k$ after reallocation and next local state $\hat{s}^{\prime}_k$ after sharing.

     \item[\textbf{Step 5}]\textbf{ Store samples in buffer:} For each $k=1, \ldots, N$, store the results of step 4 as a tuple sample $\left\langle \hat{\boldsymbol{s}}, \boldsymbol{a}, \boldsymbol{\hat{r}}, \hat{\boldsymbol{s}}^{\prime}, \overline{\boldsymbol{a}} \right\rangle$ in replay buffer $\mathcal{D}_k$;
         If the number of samples stored in the $\mathcal{D}_k$ is less than the minimum number of samples required for training, goto Step 1, otherwise the next step is executed sequentially.

     \item[\textbf{Step 6}]\textbf{ Compute sample target values:} For each $k=1, \ldots, N$, $M$ samples are randomly extracted from $\mathcal{D}_k$ and the target value $Y_{k}^{\mathrm{Co-DQL}}$ is calculated according to the sample data.

     \item[\textbf{Step 7}]\textbf{ Update Neural Network Parameters:} For each $k=1, \ldots, N$, the gradient of the parameter $\phi_k$ is obtained from the loss function, and $\phi_k$ is updated according to the learning rate, then $\phi_{-,k}$ is softly updated with update rate $\tau$. Goto Step 1. \\
\end{description}

For most RL algorithms, the termination condition is generally set to be that the number of episodes experienced by agents reachs the preset number. The preset number of episodes is usually selected according to the training situation of the algorithm in the given problem.

The action-value function $Q^\mathfrak{a}_{k}(\cdot|\phi)$ (parameterized by $\phi$) is trained by minimizing the loss:
\begin{equation}\label{eq88}
\ell(\phi_{k})=\left(Q^\mathfrak{a}_{k}(\hat{s}_{k}, a_{k}, \overline{a}_{k}; \phi)-Y_{k}^{\mathrm{Co-DQL}}\right)^{2},
\end{equation}
where $Y_{k}^{\mathrm{Co-DQL}}$ is the target value of agent $k$ and is calculated by the following formulation:
\begin{equation}\label{eq99}
Y_{k}^{\mathrm{Co-DQL}}\!=\!\hat{r}_{k}\!+\gamma Q^\mathfrak{b}_{k}({\hat{s}_{k}}^{\prime}, \operatorname{argmax}_{a_k} \!Q^\mathfrak{a}_{k}(\hat{s}^{\prime}_{k},\! a_k,\! \overline{a}_{k};\! \phi),\overline{a}^{\prime}_{k};\! \phi_{-}),
 \end{equation}

In Co-DQL, the mean field approximation makes every independent agent  learn the awareness of collaboration with the others. Moreover, the reward allocation mechanism and the local state sharing method of agents improve the stability and robustness of the training process compared with the independent agent learning method.

In order to theoretically support the effectiveness of our proposed Co-DQL algorithm, we provide the convergence proof  under some assumptions in the next subsection.
\begin{algorithm}[!t]
\caption{Co-DQL}
\LinesNumbered
\KwIn{Initial parameters $\phi$ and mean action $\overline{a}$ for all agents}
\KwOut{Parameters $\phi$ for all agents}
 Initialize $Q^\mathfrak{a}_{k}(\cdot|{\phi}), Q^\mathfrak{b}_{k}(\cdot|{\phi_{-}})$ and $\overline{a}_{k}$ for all $k \in\{1,\ldots,N\}$ \\
\While{not termination condition}{
    For each agent $k$ , select action $a_{k}$ using the UCB exploration strategy from Eq.~\ref{eq77}\\
    Take the joint action $\boldsymbol{a}=(a_{1}, \ldots a_{N})$ and observe the reward $\boldsymbol{r}=(r_{1}, \ldots, r_{N})$ and the next observations $\boldsymbol{s}^{\prime}=(s^{\prime}_{1}, \ldots, s^{\prime}_{N})$ \\
    Compute $\overline{\boldsymbol{a}}, \boldsymbol{\hat{r}}$, $\hat{\boldsymbol{s}}$ and $\hat{\boldsymbol{s}}^{\prime}$ \\
    Store $\left\langle \hat{\boldsymbol{s}}, \boldsymbol{a}, \boldsymbol{\hat{r}}, \hat{\boldsymbol{s}}^{\prime}, \overline{\boldsymbol{a}} \right \rangle$ in replay buffer $\mathcal{D}$ \\
    \For{$k=1$ to $N$}{
        Sample $M$ experiences $\left\langle \hat{\boldsymbol{s}}, \boldsymbol{a}, \boldsymbol{\hat{r}}, \hat{\boldsymbol{s}}^{\prime}, \overline{\boldsymbol{a}} \right\rangle$ from $\mathcal{D}$\\
        Compute target value $Y_{k}^{\mathrm{Co-DQL}}$ by Eq.~\ref{eq99}\\
        Update the $Q$ network by minimizing the loss
        $$\mathcal{L}(\phi_{k})=\frac{1}{M}\sum\left(Q^\mathfrak{a}_{k}(\hat{s}_{k},a_{k},\overline{a}_{k};\phi)-Y_{k}^{\mathrm{Co-DQL}}\right)^{2}$$
    }
    Update the parameters of the target network for each agent $k$ with updating rate $\tau$
    $$\phi_{k}^{-} \leftarrow \tau \phi_{k}+(1-\tau) \phi_{_,k}$$
    }
\end{algorithm}

\subsection{Convergence Analysis}\label{subsec3.3}

In previous literature, the convergence of mean field Q-learning under the set of tabular Q-functions and the convergence of when Q-function is represented by other function approximators have been proved \cite{li2019efficient} \cite{yang2018mean}. Under similar constraints, we develop the convergence proof of Co-DQL, which is the mean field RL with double estimators.

Assuming that there are only a limited number of state-action pairs, for each agent $k$, we can write updating rules of two functions $Q^\mathfrak{a}_{k}$ and $Q^\mathfrak{b}_{k}$ of agent $k$ according to Section~\ref{subsec3.1} and Section~\ref{subsec3.2}:
\begin{equation}\label{eq107}
  \begin{aligned}
&Q^\mathfrak{a}_{k}(s,a_k,\overline{a}_{k}) \!\leftarrow\! (1 \!- \!\alpha) Q^\mathfrak{a}_{k}(s,a_k,\overline{a}_{k}) \! + \!\alpha (r \! + \!\gamma Q^\mathfrak{b}_{k}(s^{\prime}, \mathfrak{a}^{*}_k,\overline{a}_{k}))\\
&Q^\mathfrak{b}_{k}(s,a_k,\overline{a}_{k}) \!\leftarrow\! (1 \!- \!\alpha) Q^\mathfrak{b}_{k}(s,a_k,\overline{a}_{k}) \! + \!\alpha (r  \!+ \!\gamma Q^\mathfrak{a}_{k}(s^{\prime}, \mathfrak{b}^{*}_k,\overline{a}_{k})),
  \end{aligned}
\end{equation}
where $\mathfrak{a}^{*}_k = \operatorname{argmax}_{a_k} Q^\mathfrak{a}_{k}(s^{\prime}, a_k, \overline{a}_{k})$, and $\mathfrak{b}^{*}_k = \operatorname{argmax}_{a_k} Q^\mathfrak{b}_{k}(s^{\prime}, a_k, \overline{a}_{k})$. At any update time step, either of the two of Eq.~\ref{eq107} is updated. Our goal is to prove that both $\boldsymbol{Q}^\mathfrak{a} = (Q^\mathfrak{a}_1, \ldots, Q^\mathfrak{a}_N)$ and $\boldsymbol{Q}^\mathfrak{b} = (Q^\mathfrak{b}_1, \ldots, Q^\mathfrak{b}_N)$ converge to Nash Q-values.
Our proof follows the convergence proof framework of single agent Double Q-learning \cite{hasselt2010double}, and we use the following assumptions and lemma.

\begin{assumption}
Each action-value pair is visited infinitely often, and the reward is bounded by some constant $K$ .
\end{assumption}
\begin{assumption}
Agent's policy is Greedy in the Limit with Infinite Exploration (GLIE). In the case with the Boltzmann
policy, the policy becomes greedy w.r.t. the Q-function in the limit as the temperature decays asymptotically to zero.
\end{assumption}
\begin{assumption}
For each stage game $[Q_{t}^{1}(s), \ldots, Q_{t}^{N}(s)]$ at time $t$ and in state $s$ in training, for all $t, s, j \in\{1, \ldots, N\}$, the Nash equilibrium $\boldsymbol{\pi}_{*}=[\pi_{*}^{1}, \ldots, \pi_{*}^{N}]$ is recognized either as 1) the global optimum or 2) a saddle point expressed as:
\begin{enumerate}
  \item $\mathbb{E}_{\pi_{*}}[Q_{t}^{j}(s)] \geq \mathbb{E}_{\pi}[Q_{t}^{j}(s)], \forall \pi \in \Omega (\prod_{k} \mathcal{A}^{k})$;
  \item
      $\mathbb{E}_{\pi_{*}}[Q_{t}^{j}(s)] \geq \mathbb{E}_{\pi^j} \mathbb{E}_{\pi_{*}^{-j}}[Q_{t}^{j}(s)], \forall \pi^{j} \in \Omega (\mathcal{A}^{j})$ and \\
      $\mathbb{E}_{\pi_{*}}[Q_{t}^{j}(s)] \leq \mathbb{E}_{\pi_{*}^{j}} \mathbb{E}_{\pi^{-j}}[Q_{t}^{j}(s)], \forall \pi^{-j} \in \Omega (\prod_{k\neq j} \mathcal{A}^{k}).$
\end{enumerate}
\end{assumption}
\begin{lemma}
The random process $\left\{\Delta_{t}\right\}$ defined in $\mathbb{R}$ as
$$\Delta_{t+1}(x)=\left(1-\alpha_{t}(x)\right) \Delta_{t}(x)+\alpha_{t}(x) F_{t}(x)$$
converges to zero with probability 1 (w.p.1) when
\begin{enumerate}
  \item $0 \leq \alpha_{t}(x) \leq 1, \sum_{t} \alpha_{t}(x)=\infty, \sum_{t} \alpha_{t}^{2}(x)<\infty$;
  \item $x \in \mathscr{X},$ the set of possible states, and $|\mathscr{X}|<\infty$;
  \item $\left\|\mathbb{E}\left[F_{t}(x) | \Im_{t}\right]\right\|_{W} \leq \gamma\left\|\Delta_{t}\right\|_{W}+c_{t},$ where $\gamma \in[0,1)$ and $c_{t}$ converges to zero w.p.1;
  \item $\operatorname{var}\left[F_{t}(x)|\Im_{t}\right] \leq K(1+\left\|\Delta_{t}\right\|_{W}^{2})$ with constant $K>0$.
\end{enumerate}
Here $\mathscr{F}_{t}$ denotes the filtration of an increasing sequence of $\sigma$-fields including the history of processes; $\alpha_{t}, \Delta_{t}, F_{t} \in \mathscr{F}_{t}$ and $\|\cdot\|_{W}$ is a weighted maximum norm [30].
\end{lemma}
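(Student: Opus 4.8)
The plan is to prove Lemma~1 as a classical stochastic-approximation result, in the spirit of Jaakkola--Jordan--Singh and Bertsekas--Tsitsiklis, via the usual two-step reduction: first remove the vanishing perturbation $c_t$, then split the residual iteration into a contraction-driven component and a noise-driven component and show each vanishes. For the first reduction, since $c_t\to 0$ w.p.1 its cumulative effect on $\Delta_t$ can be absorbed into an auxiliary non-negative process $\eta_{t+1}(x)=(1-\alpha_t(x))\eta_t(x)+\alpha_t(x)c_t$ with $\eta_0=0$, which tends to $0$ w.p.1 because $c_t\to0$ while the step sizes $\alpha_t(x)\in[0,1]$ average it out; a comparison argument then reduces the claim to the case $c_t\equiv0$, at the cost of an extra term bounded by $\eta_t(x)$ in the final estimate.

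Next I would split $F_t(x)=g_t(x)+w_t(x)$ with $g_t(x):=\mathbb{E}[F_t(x)\mid\mathscr{F}_t]$ and $w_t(x):=F_t(x)-\mathbb{E}[F_t(x)\mid\mathscr{F}_t]$, the latter being a martingale difference w.r.t.\ $\{\mathscr{F}_t\}$. Condition~(3) gives the contraction $\|g_t\|_W\le\gamma\|\Delta_t\|_W$, and condition~(4) controls the conditional variance of $w_t$ by $K(1+\|\Delta_t\|_W^2)$. I then introduce two processes driven by the same step sizes $\alpha_t$: a ``contraction'' process $\beta_{t+1}(x)=(1-\alpha_t(x))\beta_t(x)+\alpha_t(x)\gamma\|\Delta_t\|_W$ with $\beta_0=\|\Delta_0\|_W$, and a ``noise'' process $\nu_{t+1}(x)=(1-\alpha_t(x))\nu_t(x)+\alpha_t(x)w_t(x)$ with $\nu_0=0$. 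An induction on $t$ yields $|\Delta_t(x)|\le\beta_t(x)+|\nu_t(x)|+\eta_t(x)$ for every $x\in\mathscr{X}$.

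It then remains to show that $\beta_t$ and $\nu_t$ both vanish. The process $\nu_t$ is a Robbins--Monro iteration driven by a martingale difference whose conditional variance is bounded by $K(1+\|\Delta_t\|_W^2)$; under condition~(1), i.e.\ $\sum_t\alpha_t(x)=\infty$ and $\sum_t\alpha_t^2(x)<\infty$, standard martingale-convergence arguments give $\nu_t(x)\to0$ w.p.1 provided $\sup_t\|\Delta_t\|_W<\infty$. Granting that boundedness, the relation $|\Delta_t|\le\beta_t+|\nu_t|+\eta_t$ together with the fact that $\beta_{t+1}(x)$ is a convex combination of $\beta_t(x)$ and $\gamma\|\Delta_t\|_W$ driven by non-summable step sizes forces $\limsup_t\|\Delta_t\|_W\le\gamma\limsup_t\|\Delta_t\|_W$; since $\gamma<1$ this yields $\limsup_t\|\Delta_t\|_W=0$, hence $\Delta_t\to0$ w.p.1. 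Finiteness of $\mathscr{X}$ (condition~(2)) is precisely what lets the coordinatewise estimates be turned into weighted-max-norm statements without measurability subtleties.

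The main obstacle is the circular dependence just flagged: $\nu_t\to0$ requires $\|\Delta_t\|_W$ to be a.s.\ bounded, yet boundedness of $\|\Delta_t\|_W$ is essentially what we want to prove. I would break the loop with the standard rescaling/``doubling'' argument of Bertsekas--Tsitsiklis: show that whenever $\|\Delta_t\|_W$ exceeds a sufficiently large threshold it must subsequently contract back below it, because the relative magnitude of the noise term compared with the $(1-\gamma)$-gap shrinks to $0$; this establishes a.s.\ boundedness of $\|\Delta_t\|_W$, after which the variance bound in condition~(4) becomes a genuine constant, $\nu_t\to0$ follows, and the contraction estimate closes the argument. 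A minor remaining check is that the induction $|\Delta_t|\le\beta_t+|\nu_t|+\eta_t$ survives the reintroduction of $c_t$, which it does since $\eta_t$ was constructed for exactly that role.
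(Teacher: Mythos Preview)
Your proposal is correct and is in fact far more detailed than what the paper offers: the paper's entire proof of Lemma~1 is a one-line deferral to Theorem~1 of Jaakkola--Jordan--Singh and Corollary~5 of Szepesv\'ari, with no argument given. Your sketch---split $F_t$ into its conditional mean (a $\gamma$-contraction in $\|\cdot\|_W$ up to the vanishing $c_t$) and a martingale-difference noise term, handle the noise via the square-summable step sizes and the variance bound, then close with the Bertsekas--Tsitsiklis rescaling trick to break the boundedness/convergence circularity---is precisely the content of those cited results, so you are not taking a different route but rather unpacking the citation the paper leans on.

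One small technical caution: the pointwise inequality $|\Delta_t(x)|\le\beta_t(x)+|\nu_t(x)|+\eta_t(x)$ does not propagate cleanly by induction as written, because $|\nu_{t+1}(x)|$ need not dominate $|(1-\alpha_t)\nu_t+\alpha_t w_t|$ in the way the triangle inequality would require (signs can cancel). The standard fix is to work instead with the exact decomposition $\Delta_t=\tilde\beta_t+\nu_t$ where $\tilde\beta_{t+1}=(1-\alpha_t)\tilde\beta_t+\alpha_t g_t$, bound $\|\tilde\beta_t\|_W$ above using the contraction on $g_t$, and only take absolute values at the end; this is what Jaakkola et al.\ and Bertsekas--Tsitsiklis actually do, and your overall strategy is unaffected.
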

\begin{proof}
Similar to the proof of Theorem 1 in \cite{jaakkola1994convergence} and Corollary 5 in \cite{szepesvari1999unified}.
\end{proof}
Our theorem and proof sketches are as follows:
\begin{theorem}
In a finite-state stochastic game, if Assumption 1,2 \& 3, and Lemma 1's first and second conditions are met, then both $\boldsymbol{Q}^\mathfrak{a}$ and $\boldsymbol{Q}^\mathfrak{b}$ as updated by the rule of Algorithm 2 in Eq.~\ref{eq107} will converge to the Nash Q-value $\boldsymbol{Q}^{*}=(Q_{1}^{*},\ldots, Q_{N}^{*})$ with probability one.
\end{theorem}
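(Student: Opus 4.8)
The plan is to cast the coupled updates in Eq.~\ref{eq107} into the stochastic-approximation form of Lemma 1 and verify its remaining hypotheses. For each agent $k$ and each tabular entry $x=(s,a_k,\overline{a}_k)$, I would introduce the error processes $\Delta^{\mathfrak{a}}_{k,t}(x)=Q^{\mathfrak{a}}_{k,t}(x)-Q^{*}_{k}(x)$ and $\Delta^{\mathfrak{b}}_{k,t}(x)=Q^{\mathfrak{b}}_{k,t}(x)-Q^{*}_{k}(x)$, stacked over $k$ as $\Delta^{\mathfrak{a}}_{t},\Delta^{\mathfrak{b}}_{t}$, and subtract the Nash $Q$-value $Q^{*}_{k}(x)$ from the $Q^{\mathfrak{a}}_{k}$-line of Eq.~\ref{eq107}. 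Since at any step only the randomly chosen estimator carries a nonzero step size, this yields $\Delta^{\mathfrak{a}}_{k,t+1}(x)=(1-\alpha_{t}(x))\Delta^{\mathfrak{a}}_{k,t}(x)+\alpha_{t}(x)F^{\mathfrak{a}}_{k,t}(x)$ with $F^{\mathfrak{a}}_{k,t}(x)=\hat{r}_{k,t}+\gamma Q^{\mathfrak{b}}_{k,t}(s',\mathfrak{a}^{*}_{k},\overline{a}_{k})-Q^{*}_{k}(x)$, and symmetrically for $\mathfrak{b}$. Conditions 1 and 2 of Lemma 1 are assumed in the hypotheses; Assumption 1 (each pair visited infinitely often, bounded rewards) guarantees that the effective step-size subsequence for each $x$ still sums to infinity with a square-summable tail.

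The heart of the argument is condition 3. Following the double-estimator decomposition of \cite{hasselt2010double}, I would split $F^{\mathfrak{a}}_{k,t}(x)=G_{k,t}(x)+\gamma\bigl(Q^{\mathfrak{b}}_{k,t}(s',\mathfrak{a}^{*}_{k},\overline{a}_{k})-Q^{\mathfrak{a}}_{k,t}(s',\mathfrak{a}^{*}_{k},\overline{a}_{k})\bigr)$, where $G_{k,t}(x)=\hat{r}_{k,t}+\gamma Q^{\mathfrak{a}}_{k,t}(s',\mathfrak{a}^{*}_{k},\overline{a}_{k})-Q^{*}_{k}(x)$ is exactly the single-estimator mean-field Nash-$Q$ error term. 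Invoking Assumption 2 (GLIE, so the behaviour policy converges to the stage-game equilibrium policy) and Assumption 3 (the equilibrium is a global optimum or a saddle point, so the mean-field Nash backup operator is well defined and non-expansive in the weighted max norm), together with the Taylor control of the mean-field remainder $R_{k}$ from Eq.~\ref{eq66}, one obtains $\|\mathbb{E}[G_{t}\mid\mathscr{F}_{t}]\|_{W}\leq\gamma\|\Delta^{\mathfrak{a}}_{t}\|_{W}+c'_{t}$ with $c'_{t}\to 0$ w.p.1, just as in the convergence proofs of mean-field $Q$-learning \cite{yang2018mean} \cite{li2019efficient}. What remains is to show the extra term $\gamma(Q^{\mathfrak{b}}_{k,t}-Q^{\mathfrak{a}}_{k,t})$ evaluated at $(s',\mathfrak{a}^{*}_{k},\overline{a}_{k})$ can be absorbed into the vanishing sequence $c_{t}$ of Lemma 1; this is the step specific to the double estimator.

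For that, I would analyse the gap process $\Psi_{k,t}(x)=Q^{\mathfrak{a}}_{k,t}(x)-Q^{\mathfrak{b}}_{k,t}(x)$. Because at each step exactly one estimator is updated, chosen uniformly and independently of the history, and the two rules in Eq.~\ref{eq107} are structurally identical, $\Psi_{k,t}$ satisfies a recursion of the Lemma 1 form whose driving term has conditional mean contracting $\|\Psi_{t}\|_{W}$ by the factor $\gamma$: the two targets differ only in which estimator supplies the bootstrap value, and the $\operatorname{argmax}$ mismatch between $\mathfrak{a}^{*}_{k}$ and $\mathfrak{b}^{*}_{k}$ contributes at most $\gamma\|\Psi_{t}\|_{W}$ via $|\max_{a}f(a)-\max_{a}g(a)|\leq\max_{a}|f(a)-g(a)|$. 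Bounded rewards (Assumption 1) give the quadratic variance bound of condition 4. Hence $\Psi_{k,t}\to 0$ w.p.1, so the extra term is a legitimate $c_{t}\to 0$. Feeding this back, Lemma 1 applies to $\Delta^{\mathfrak{a}}_{t}$ and, by the same argument with the roles of $\mathfrak{a}$ and $\mathfrak{b}$ interchanged, to $\Delta^{\mathfrak{b}}_{t}$, so both $\boldsymbol{Q}^{\mathfrak{a}}_{t}$ and $\boldsymbol{Q}^{\mathfrak{b}}_{t}$ converge to the Nash $Q$-value $\boldsymbol{Q}^{*}$ with probability one.

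The step I expect to be the main obstacle is the contraction estimate $\|\mathbb{E}[G_{t}\mid\mathscr{F}_{t}]\|_{W}\leq\gamma\|\Delta^{\mathfrak{a}}_{t}\|_{W}+c'_{t}$: establishing that the mean-field Nash backup is non-expansive forces heavy use of the restrictive Assumption 3, and one must simultaneously show that both the policy-tracking error (from GLIE) and the mean-field approximation error $R_{k}$ of Eq.~\ref{eq66} vanish quickly enough to be folded into $c'_{t}$ — exactly the delicate points inherited from \cite{yang2018mean}. By comparison, the variance bookkeeping and step-size accounting are routine, and the coupling between the $\Psi$-recursion and the $\Delta$-recursion is handled cleanly by the order of argument: prove $\Psi_{t}\to 0$ first, since its recursion does not require $\Delta_{t}\to 0$, and then use it as the forcing term for $\Delta_{t}$.
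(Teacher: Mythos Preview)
Your proposal is correct and follows essentially the same route as the paper's proof: both subtract $\boldsymbol{Q}^{*}$ to obtain the error process, split the driving term into the single-estimator mean-field Nash-$Q$ part (whose contraction is imported from \cite{yang2018mean}) plus the cross-estimator gap $\gamma(\boldsymbol{Q}^{\mathfrak{b}}_{t}-\boldsymbol{Q}^{\mathfrak{a}}_{t})$, show the gap process itself satisfies a Lemma~1 recursion and hence vanishes, and then feed this back as the $c_{t}$ of condition 3. The only cosmetic difference is that the paper bounds the cross term $Q^{\mathfrak{a}}_{t}(s',\mathfrak{b}^{*})-Q^{\mathfrak{b}}_{t}(s',\mathfrak{a}^{*})$ via an explicit two-case argument, whereas you invoke the max-Lipschitz inequality; note, however, that the quantity to control is $|f(\arg\max g)-g(\arg\max f)|$ rather than $|\max f-\max g|$, so when you write this up you will still need the one-line case split (exactly the paper's Case~1/Case~2) to reach $\max_{a}|f(a)-g(a)|$.
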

\begin{proof}
We need to show that the third and fourth conditions of Lemma 1 hold so that we can apply it to prove Theorem 1.
Obviously, the updates of functions $\boldsymbol{Q}^\mathfrak{a}$ and $\boldsymbol{Q}^\mathfrak{b}$ are symmetrical, so as long as one of them is proved to converge, the other must converge.
By subtracting  two sides of Eq.~\ref{eq107} by $\boldsymbol{Q}^{*}$, and then the following formula can be obtained by comparing with the equation in Lemma 1:
\begin{equation}
\begin{aligned}
 & \boldsymbol{\Delta}_{t}(s, \boldsymbol{a})=\boldsymbol{Q}^\mathfrak{a}_{t}(s, \boldsymbol{a})-\boldsymbol{Q}_{*}(s, \boldsymbol{a}) \\
 & \boldsymbol{F}_{t}(s_{t},\boldsymbol{a}_{t})=\boldsymbol{r}_{t}+\gamma \boldsymbol{Q}^\mathfrak{b}_t\left(s_{t+1},\mathfrak{a}^{*}\right)-\boldsymbol{Q}_{*}\left(s_{t},\boldsymbol{a}_{t}\right)\label{eq108},
\end{aligned}
\end{equation}
where $\mathfrak{a}^{*}=\operatorname{argmax}_{\boldsymbol{a}} \boldsymbol{Q}^\mathfrak{a}(s_{t+1},\boldsymbol{a}_{t}, \overline{\boldsymbol{a}}_{t})$. Let $\Im_{t}=\{\boldsymbol{Q}^\mathfrak{a}_0,\boldsymbol{Q}^\mathfrak{b}_0, s_0, \boldsymbol{a}_0, \alpha_{0}, \boldsymbol{r}_{1}, s_{1}, \dots, s_{t}, \boldsymbol{a}_{t}\}$ denote the $\sigma$-fields generated by all random variables in the history of the stochastic game up to time $t$. Note that $\boldsymbol{Q}^\mathfrak{a}_t$ and $\boldsymbol{Q}^\mathfrak{b}_t$ are two random variables derived from the historical trajectory up to time $t$. Given the fact that all $\boldsymbol{Q}^\mathfrak{a}_\tau$ and $\boldsymbol{Q}^\mathfrak{b}_\tau$ with $\tau< t$ are $\mathscr{F}_t$ -measurable, both $\boldsymbol{\Delta}_{t}$ and $\boldsymbol{F}_{t}$ are therefore also $\mathscr{F}_t$ -measurable.

Since the reward is bounded by some constant $K$ in Assumption 1, then $\operatorname{Var}[\boldsymbol{r}_t]<\varpropto$, the fourth condition in the lemma holds.

Next, we show that the third condition of the lemma holds. We can rewrite Eq.~\ref{eq108} as follows:
\begin{equation}\label{eq109}
  \boldsymbol{F}_{t}(s_{t},\boldsymbol{a}_{t})=\boldsymbol{F}^Q_{t}(s_{t},\boldsymbol{a}_{t})+ \gamma (\boldsymbol{Q}^\mathfrak{b}_t(s_{t+1},\mathfrak{a}^{*}) - \boldsymbol{Q}^\mathfrak{a}_t(s_{t+1},\mathfrak{a}^{*})),
\end{equation}
where $\boldsymbol{F}^Q_{t}=\boldsymbol{r}_{t}+\gamma \boldsymbol{Q}^\mathfrak{a}_t \left(s_{t+1}, \mathfrak{a}^{*}\right) -\boldsymbol{Q}^{*} \left(s_{t}, \boldsymbol{a}_{t}\right)$ is the value of $\boldsymbol{F}_{t}$ if normal MF-Q would be under consideration. In [17], $\|\mathbb{E}[\boldsymbol{F}^Q_{t}|\Im_{t}]\|_{W}\leq \gamma\left\|\Delta_{t}\right\|_{W}$ has been proved, so in order to meet the third condition, we identify $c_t =\gamma (\boldsymbol{Q}^\mathfrak{b}_t \left(s_{t+1}, \mathfrak{a}^{*} \right) -\boldsymbol{Q}^\mathfrak{a}_t \left(s_{t+1}, \mathfrak{a}^{*}\right))$ and it is sufficient to show that $\boldsymbol{\Delta}_{t}^{\mathfrak{b} \mathfrak{a}} =\boldsymbol{Q}_{t}^\mathfrak{b} -\boldsymbol{Q}_{t}^\mathfrak{a}$ converges to zero. The update of $\boldsymbol{\Delta}_{t}^{\mathfrak{b}\mathfrak{a}}$ depends on whether $\boldsymbol{Q}^\mathfrak{b}$ or $\boldsymbol{Q}^\mathfrak{a}$ is updated, so
\begin{equation}
  \begin{aligned}\label{eq110}
&\boldsymbol{\Delta}_{t+1}^{\mathfrak{b}\mathfrak{a}}(s_t,\boldsymbol{a}_t) = \boldsymbol{\Delta}_{t}^{\mathfrak{b}\mathfrak{a}}(s_t, \boldsymbol{a}_t) + \alpha_t \boldsymbol{F}^\mathfrak{b}_{t}(s_t,\boldsymbol{a}_t) ,\text{ or}\\
&\boldsymbol{\Delta}_{t+1}^{\mathfrak{b}\mathfrak{a}}(s_t,\boldsymbol{a}_t) = \boldsymbol{\Delta}_{t}^{\mathfrak{b}\mathfrak{a}}(s_t, \boldsymbol{a}_t) - \alpha_t \boldsymbol{F}^\mathfrak{b}_{t}(s_t,\boldsymbol{a}_t),
  \end{aligned}
\end{equation}
where $\boldsymbol{F}^\mathfrak{a}_{t}(s_t, \boldsymbol{a}_t)= \boldsymbol{r}_{t}+\gamma \boldsymbol{Q}^\mathfrak{b}_t\left(s_{t+1}, \mathfrak{a}^{*}\right) - \boldsymbol{Q}^\mathfrak{a}_t \left(s_{t}, \boldsymbol{a}_{t} \right)$ and $\boldsymbol{F}^\mathfrak{b}_t(s_t, \boldsymbol{a}_t) = \boldsymbol{r}_{t}+\gamma \boldsymbol{Q}^\mathfrak{a}_t \left(s_{t+1}, \mathfrak{b}^{*}\right) - \boldsymbol{Q}^\mathfrak{b}_t \left(s_{t}, \boldsymbol{a}_{t} \right)$. We define $\xi^{\mathfrak{b}\mathfrak{a}}_t=\frac{1}{2}\alpha_t$, then
\begin{equation*}
 \begin{split}
  \mathbb{E}&[\boldsymbol{\Delta}^{\mathfrak{b}\mathfrak{a}}_{t+1}(s_t,\boldsymbol{a}_t)|\Im_{t}] = \boldsymbol{\Delta}^{\mathfrak{b}\mathfrak{a}}_{t}(s_t,\boldsymbol{a}_t)
     + \mathbb{E}[\alpha_t \boldsymbol{F}^\mathfrak{b}_{t}(s_t,\boldsymbol{a}_t)\\
  &\quad - \alpha_t \boldsymbol{F}^\mathfrak{a}_{t}(s_t,\boldsymbol{a}_t)|\Im_{t}]\\
  &= \boldsymbol{\Delta}^{\mathfrak{b}\mathfrak{a}}_{t}(s_t,\boldsymbol{a}_t)+ \mathbb{E}[\alpha_t \gamma (\boldsymbol{Q}^\mathfrak{a}_t\left(s_{t+1},\mathfrak{b}^{*}\right) -\boldsymbol{Q}^\mathfrak{b}_t\left(s_{t+1},\mathfrak{a}^{*}\right))\\
  &\quad -\alpha_t (\boldsymbol{Q}^\mathfrak{b}_t\left(s_{t},\boldsymbol{a}_{t}\right) -\boldsymbol{Q}^\mathfrak{a}_t\left(s_{t},\boldsymbol{a}_{t}\right))|\Im_{t}]\\
  &= (1- \xi^{\mathfrak{b}\mathfrak{a}}_t (s_t,\boldsymbol{a}_t)) \boldsymbol{\Delta}^{\mathfrak{b}\mathfrak{a}}_{t} (s_t, \boldsymbol{a}_t)\\
  &\quad +\xi^{\mathfrak{b}\mathfrak{a}}_t(s_t, \boldsymbol{a}_t) \mathbb{E}[\boldsymbol{F}^{\mathfrak{b}\mathfrak{a}}_{t} (s_t, \boldsymbol{a}_t)|\Im_{t}],
\end{split}
\end{equation*}
where $\mathbb{E}[\boldsymbol{F}^{\mathfrak{b}\mathfrak{a}}_{t}(s_t,\boldsymbol{a}_t)|\Im_{t}]\!=\!\gamma \mathbb{E}[\boldsymbol{Q}^\mathfrak{a}_t\left(\! s_{t+1},\! \mathfrak{a}^{*}\! \right)\!- \! \boldsymbol{Q}^\mathfrak{b}_t\left(\! s_{t+1},\! \mathfrak{a}^{*}\!\right)\! | \Im_{t}]$. At each time step, one of the following two cases must hold.

Case 1: $\mathbb{E}[\boldsymbol{Q}^\mathfrak{a}_t\left(s_{t+1},\! \mathfrak{b}^{*} \right)\!|\Im_{t}]\! \geq\! \mathbb{E}[\boldsymbol{Q}^\mathfrak{b}_t\left(s_{t+1},\! \mathfrak{a}^{*}\right)\!|\Im_{t}]$. We have $\boldsymbol{Q}^\mathfrak{a}_t\left(\!s_{t+1}, \! \mathfrak{a}^{*}\!\right)\!=\!\max \boldsymbol{Q}^\mathfrak{a}_t\left(\! s_{t+1}, \! \boldsymbol{a} \!\right) \! \geq \! \boldsymbol{Q}^\mathfrak{a}_t\left(\! s_{t+1},\! \mathfrak{b}^{*}\! \right),$ therefore
\begin{equation*}
  \begin{aligned}
  |\mathbb{E}[\boldsymbol{F}^{\mathfrak{b}\mathfrak{a}}_{t}(s_t,\boldsymbol{a}_t)|\Im_{t}]| & = \gamma \mathbb{E}[\boldsymbol{Q}^\mathfrak{a}_t(s_{t+1},\mathfrak{b}^{*}) -\boldsymbol{Q}^\mathfrak{b}_t(s_{t+1}, \mathfrak{a}^{*})|\Im_{t}]\\
  & \leq \gamma \mathbb{E}[\boldsymbol{Q}^\mathfrak{a}_t(s_{t+1},\mathfrak{a}^{*})-\boldsymbol{Q}^\mathfrak{b}_t(s_{t+1}, \mathfrak{a}^{*})|\Im_{t}]\\
  & \leq\|\boldsymbol{\Delta}^{\mathfrak{b}\mathfrak{a}}_{t}\|.
   \end{aligned}
\end{equation*}
Case 2: $\mathbb{E}[\boldsymbol{Q}^\mathfrak{a}_t\left(s_{t+1},\mathfrak{b}^{*} \right)|\Im_{t}]<\mathbb{E}[\boldsymbol{Q}^\mathfrak{b}_t\left(s_{t+1}, \mathfrak{a}^{*}\right)|\Im_{t}]$. We have $\mathbb{E}[\boldsymbol{Q}^\mathfrak{b}_t\left(s_{t+1}, \mathfrak{b}^{*} \right)|\Im_{t}] \geq \mathbb{E}[\boldsymbol{Q}^\mathfrak{b}_t\left(s_{t+1}, \mathfrak{a}^{*}\right)|\Im_{t}]$. Then
\begin{equation*}
  \begin{aligned}
  |\mathbb{E}[\boldsymbol{F}^{\mathfrak{b}\mathfrak{a}}_{t}(s_t,\boldsymbol{a}_t)|\Im_{t}]| & = \gamma \mathbb{E}[\boldsymbol{Q}^\mathfrak{b}_t(s_{t+1}, \boldsymbol{e}^{*})- \boldsymbol{Q}^\mathfrak{a}_t(s_{t+1}, \mathfrak{b}^{*})|\Im_{t}]\\
  & \leq \gamma \mathbb{E}[\boldsymbol{Q}^\mathfrak{b}_t(s_{t+1}, \mathfrak{b}^{*})-\boldsymbol{Q}^\mathfrak{a}_t(s_{t+1},\mathfrak{b}^{*})|\Im_{t}]\\
  & \leq\|\boldsymbol{\Delta}^{\mathfrak{b}\mathfrak{a}}_{t}\|.
   \end{aligned}
\end{equation*}
Hence, no matter which of the above cases is hold, we can obtain the satisfactory result, that is, $|\mathbb{E}[\boldsymbol{F}^{\mathfrak{b}\mathfrak{a}}_{t}(s_t, \boldsymbol{a}_t)|\Im_{t}]| \leq \|\boldsymbol{\Delta}^{\mathfrak{b}\mathfrak{a}}_{t}\|$. Then, we can apply Lemma 1 and get the convergence of $\boldsymbol{\Delta}_{t}^{\mathfrak{b}\mathfrak{a}}$ to 0, the third condition is thus hold. Finally, with all conditions are satisfied, Theorem 1 is proved.
\end{proof}

\section{Application of Co-DQL to TSC}\label{sec4}
\begin{figure}[!t]
\centering{\includegraphics[width=0.8\linewidth]{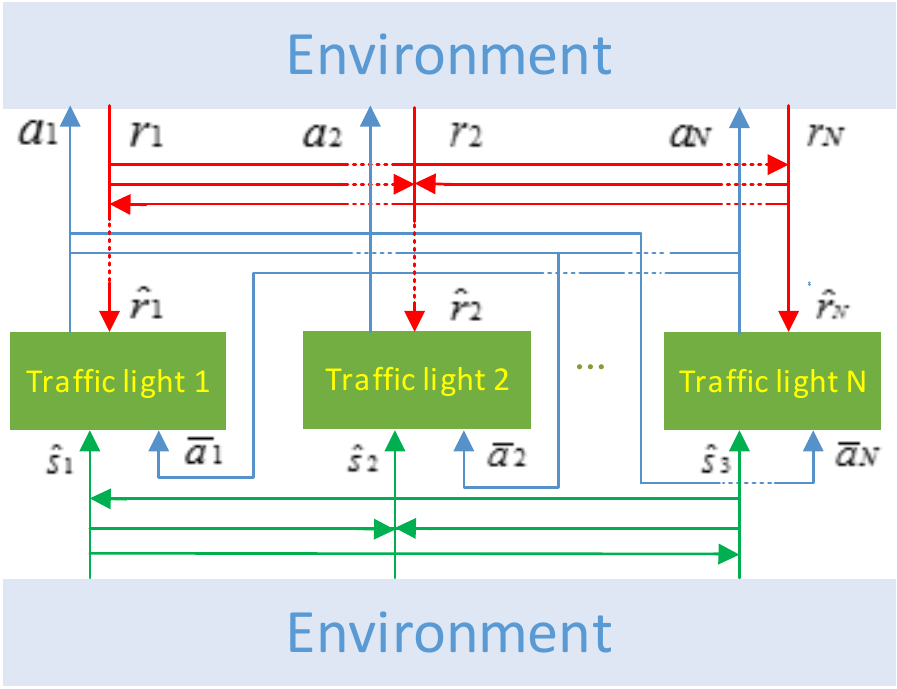}}
\caption{The architecture diagram of Co-DQL for TSC. For each $k=1,\dots,N$, $\hat{s}_k$ denotes the local state information after sharing, $\bar{a}_k$ represents the mean action information, $a_k$ means the action will be executed, ${r}_k, \hat{r}_k$ represent the immediate reward before and after reallocation, respectively.
The red arrow, blue arrow and green arrow represent the transfer of reward information, action information and state information, respectively.}\label{fig11}
\end{figure}
\begin{figure*}[!t]
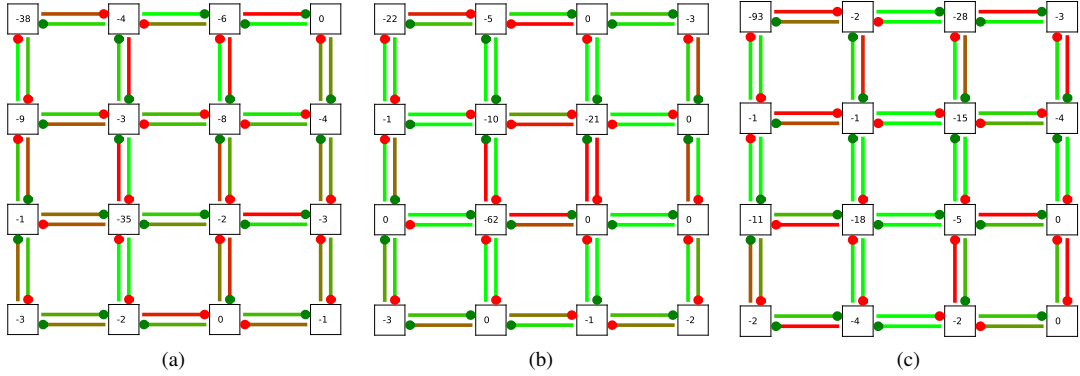

    \centering
	  \subfloat[]{\includegraphics[width=0.25\linewidth]{cut_flow1.pdf}} \label{fig22a}\quad
	  \subfloat[]{\includegraphics[width=0.25\linewidth]{cut_flow2.pdf}} \label{fig22b}\quad
	  \subfloat[]{\includegraphics[width=0.25\linewidth]{cut_flow3.pdf}} \label{fig22c}\hfill
	  \caption{Illustration of the grid traffic signal system simulator. (a) global random traffic flow, (b) double-ring traffic flow, and (c) four-ring traffic flow. Each rectangle represents a signalized intersection and each two adjacent intersections are connected by two one-way lanes. The color of each lane implies the level of congestion and the number in the rectangle represents the immediate reward for the intersection. }
	  \label{fig22}
	\end{figure*}
This section first uses MDP notations to represent the key elements of TSC problem, so that MARL can be used in TSC. To facilitate the training and evaluation of the MARL model applied to TSC problem, we also introduce the TSC simulators.

\subsection{Description of TSC Based on MDP Notations}\label{subsec4.1}
Although we model the entire traffic network in a decentralized way as a multi-agent structure, the global state of the whole traffic system is still Markov, namely, the next state only depends on the current state:
\begin{equation}\label{eq100}
s_{t+1}=f(s_{t}, \boldsymbol{a}_{t}),
\end{equation}
where $s_{t}$ and $s_{t+1}$ denotes the state of traffic system at time step $t$ and $t+1$, $\boldsymbol{a}_{t}$ denotes the joint action of the traffic system at time step $t$. Therefore, it can be modeled using the framework of MARL described in Section~\ref{subsec2.2}.

When to cope with TSC problem, there are many different MDP settings. Their differences lie in the definition of action space, state space or reward function, etc.
\cite{prashanth2010reinforcement}\cite{chu2019multi}\cite{aslani2017adaptive} \cite{el2013multiagent} \cite{jin2015adaptive} \cite{tan2019cooperative}. Here, we focus on the following two kinds of MDP settings. Note that it may be potential to extend our method to other kinds of settings. Since the source code of our method is open \footnote{https://github.com/Brucewangxq/larger\_real\_net}, an interested reader can try to test or extend it to deal with other kinds of MDP settings.

\subsubsection{A simplified MDP setting for TSC problem}\label{subsec4.1.1}
Suppose a road network has $N$ signalized intersections, i.e., $N$ signal agents.
The action of signal agent $k$ at time step $t$ can be written as $a_{k,t}$, and its local observation or state is $s_{k,t}$. We set the signal agent's actions at each intersection has only two possible cases $\{0,1\}$: Green traffic lights for incoming traffic in the north and south directions and red traffic lights in the east and west directions at the same time, or contrary to that, so the action space is $\{0,1\}^{N}$.
The local state, which is the observation vector $s_{k,t}$, is the waiting queue density (or queue length) on all the one-way lanes (or edges) connected to the intersection $k$: $s_{k,t}=[q[kn], q[ks], q[kw], q[ke]]$, where $q[kn], q[ks], q[kw]$ and $q[ke]$ represent the waiting queue density in four directions related to intersection $k$ respectively, and they are the lanes of vehicles driving in the direction of intersection $k$. The value space of each of them can be expressed as $\{0,1,2, \ldots, \max_{q}\}$, where  $\max_{q}$ is the maximum capacity of vehicles on an lane between every two intersections. For the peripheral signal agent of the   system, if there is no road connected to it in a certain direction, the number of vehicles in that direction is always zero. For simplicity, it is assumed that a normally traveling vehicle has the same speed and can start or stop immediately.

For any signal agent $k$, the reward at time step $t$ can be calculated by the number of vehicles waiting on all lanes towards the intersection, that is,
\begin{equation}\label{eq101}
 r_{k,t} =-\sum_{j \in \{n,s,w,e\}}\left|q_t[k j]\right|,
\end{equation}
where   $q_t[k j]$ is the number of vehicles that have zero speed on lane $kj$ leading to intersection $k$. To avoid changing traffic signal too frequently, the action can be taken every $\Delta t$ time steps, that is, a Markov state transition occurs only once every $\Delta t$ time steps. Then from the $T$-th to $T + 1$-th state transition, the signal agent obtains the sum of the rewards in $\Delta t$ time steps, that is,
\begin{equation}\label{eq102}
\boldsymbol{R}_{T}=\sum_{t=(T-1) \Delta t}^{T \Delta t-1} \boldsymbol{r}_{t}\left(s_{t}, \boldsymbol{a}_{t}\right),
\end{equation}

Our goal is to minimize the total waiting time of vehicles in the traffic network:
\begin{equation}\label{eq103}
\operatorname{\max}_{\boldsymbol{\pi}} J=\mathbb{E}\left[\sum_{T=1}^{T_{max}} \gamma^{T-1}\left(\frac{1}{\Delta t} \sum_{t=(T-1) \Delta t}^{T \Delta t-1} \boldsymbol{r}_{t}\left(s_{t}, \boldsymbol{a}_{t}\right)\right)\right],
\end{equation}
where $T_{max}$ denotes the total number of state transitions, and the joint action $\boldsymbol{a}$ changes every $\Delta t$ time steps.

In this simplified situation, all other agents are treated as the neighboring agents of each agent. Fig.~\ref{fig11} shows how to apply Co-DQL to TSC. The input information of each agent includes the shared local state information and the mean action information calculated from actions of the neighboring agents  in the previous time step. Each agent receives a reallocated reward after performing an action.

\subsubsection{A more realistic MDP setting for TSC problem}\label{subsec4.1.2}
In the literature of RL for TSC, there are several standard action definitions, such as phase duration\cite{aslani2017adaptive}, phase switch\cite{el2013multiagent} \cite{jin2015adaptive} and phase itself\cite{tan2019cooperative}\cite{chu2019multi}\cite{prashanth2010reinforcement}. Here, we follow the last definition and pre-define a set of feasible phases for each signal agent.
Specifically, we adopt the definition of feasible phases in \cite{chu2019multi}, which defines five feasible phases for each signal agent, including east-west straight, east-west left-turn, and three straight and left-turn for east, west and north-south. These five feasible phases constitute the action space, each phase corresponds to an action.
Each signal agent selects one of them to implement for a duration of $\Delta t$ at each Markov time step.
In addition, a yellow time $t_y<\Delta t$ is enforced after each phase switch to ensure safety.

After comprehensively understanding a variety of commonly used state definitions\cite{tan2019cooperative} \cite{aslani2017adaptive} \cite{chu2019multi}, we tend to follow the one in \cite{chu2019multi} and define  local state as
\begin{equation}\label{addeq103}
s_{k,t}=\{wait_{k,t}[lane],wave_{k,t}[lane]\},
\end{equation}
where $lane$ is each incoming lane of intersection $k$. $wait$ measures the cumulative delay [s] of the first vehicle and $wave$ measures the total number [veh] of approaching vehicles along each incoming lane. In our experiment, we use \verb"laneAreaDetector" in Simulation of Urban Mobility (SUMO)\cite{chu2016large} \cite{codeca2018monaco} to obtain the state information, and in practice, the state information can be obtained by near-intersection induction-loop detectors as described in [23].

Similar to the definition of reward in the simplified TSC problem mentioned earlier, we also further consider the cumulative delay of the first car as a regularizer:
\begin{equation}\label{addeq104}
r_{k,t} =-\sum_{lane}\left|q_{k,t+\Delta t}[lane]+\beta \cdot wait_{k,t+\Delta t}[lane]\right|,
\end{equation}
where $\beta$ is the regularization rate and typically chosen to approximately scale different reward terms into the same range. Note that the rewards are only measured at time $t+\Delta t$. Compared to other reward definitions such as wave\cite{aslani2017adaptive} and appropriateness of green time\cite{teo2014agent}, the reward we defined emphasizes traffic congestion and travel delay, and it is directly correlated to state and action\cite{chu2019multi}.

\subsection{Description of the Simulation Platform}\label{subsec4.2}
\subsubsection{A simplified TSC simulator}\label{subsec4.2.1}
\begin{table}[!t]
\renewcommand{\arraystretch}{1.3}
\caption{Parameter Settings for Simulator}
\label{tab1}
\centering
\begin{tabular}{|c|c|}
\hline
Parameter Type & Value [\emph{unit of measure}]\\
\hline
Normal driving time between two nodes & 5 [\emph{t}]\\
\hline
Initial vehicles in simulator & 100 [\emph{veh}]\\
\hline
New vehicles added &5;4;3 [\emph{veh/t}]\\
\hline
Shortest route length & 2 [\emph{n}]\\
\hline
Longest route length & 20 [\emph{n}]\\
\hline
Signal agent action time interval & 4 [\emph{t}]\\
\hline
Initial random seed number & 10 \\
\hline
\end{tabular}
\vspace{5pt}

\emph{t} means discrete time step, \emph{veh} is the abbreviation of vehicle, \emph{n} denotes node, i.e. intersection. 5;4;3 [\emph{veh/t}] means that the number of new vehicles added per time step is optional and can be set to 5,4 or 3 as needed.
\end{table}

The simulation platform used in Section~\ref{subsec5.2} is a grid TSC system based on OpenAI-gym \cite{brockman2016openai}.
There are three different scenarios in the experiment: global random traffic flow, double-ring traffic flow and four-ring traffic flow which correspond to the three subfigures of Fig.~\ref{fig22} respectively.

Each rectangle denotes a signalized intersection and the number in the rectangle represents the immediate reward for the intersection. Every two adjacent intersections are connected by two one-way lanes. The color of each lane in the picture ranges from green to red, which vaguely means the number of vehicles waiting (at zero speed) on the lane, i.e. the level of congestion. Green means unimpeded and red indicates serious congestion.
During the operation of the simulator, a certain number of vehicles will be generated at each time step and scattered randomly in the road network. And every newly generated vehicle will have a randomly generated route according to a certain rule, and the vehicle will follow the route and finally the vehicle will be removed from the road network when it reaches the destination.

Among these three scenarios, the one difference is that the  rules of generating a driving route of a vehicle, which results in different level of congestion at different intersections. This can simulate the real information of the traffic flow between the main and secondary roads in the city.
In the actual traffic network, serious congestion does often occur only in certain specific sections.
The other difference is that the number of new vehicles added per time step is various, which can be used to simulate different levels of traffic congestion.

The primary parameters of the simulator are listed in Table~\ref{tab1}. The normal driving time between two intersections, that is, the distance between two intersections, indicates that normal driving vehicles need 5 time steps from one intersection to an adjacent intersection.
The initial number (note that it is not the number after resetting the simulator when training model) of vehicles in simulator is used to obtain random seeds.
The shortest route length is 2, which means that the shortest distance that a vehicle generated in the simulator can travel is two intersections. The longest route length is 20, which means that the longest distance that a vehicle generated in the simulator can travel is twenty intersections.
The action time interval of  signal agent  is 4, which means that a signal agent must keep at least 4 time steps before it can change one action.

\subsubsection{A more realistic TSC simulator}\label{subsec4.2.2}
 \begin{figure}[!t]
\centering{\includegraphics[width=6cm,height=5cm]{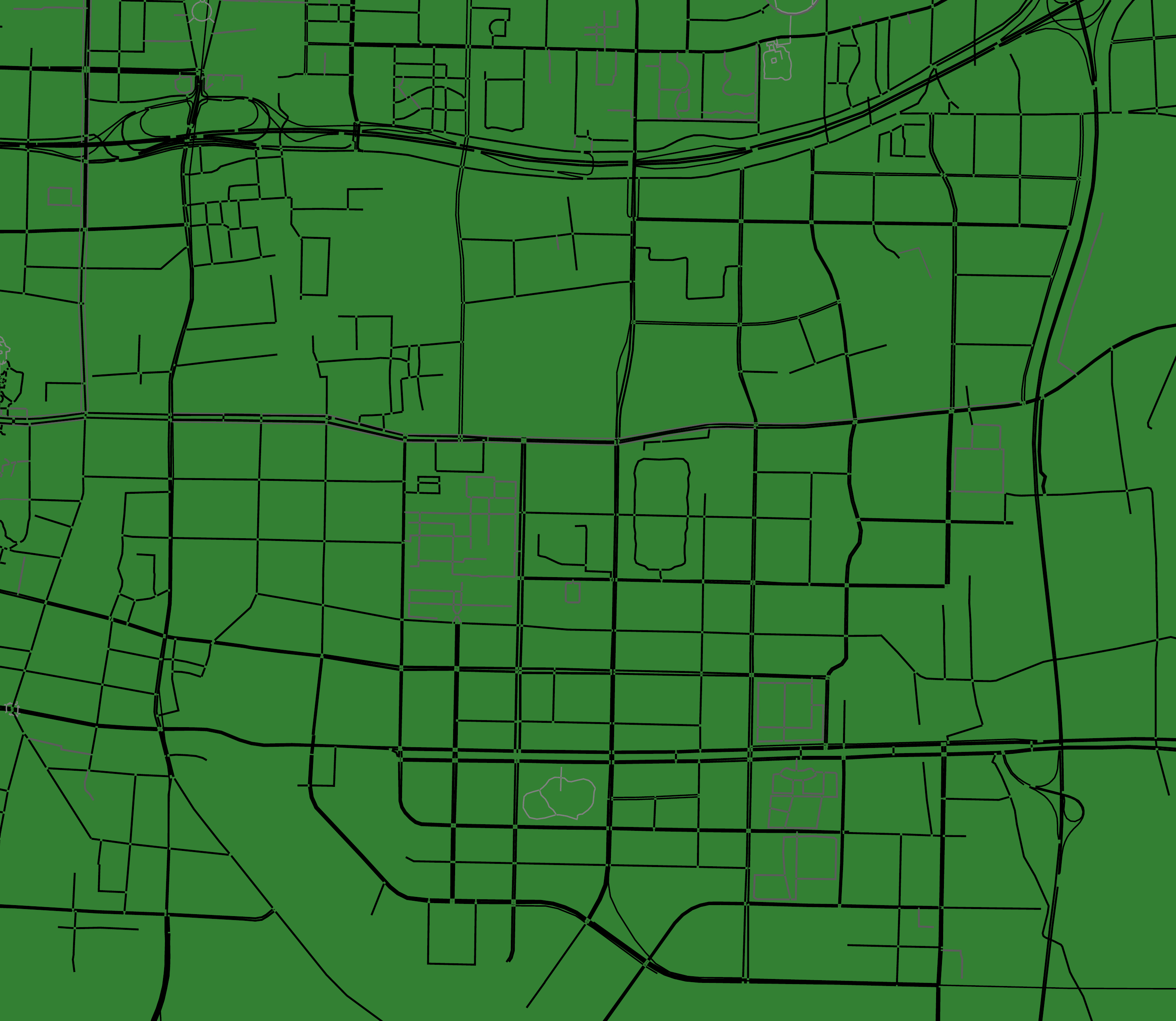}}
\caption{Overall view of the realistic road network with asymmetric geometry. \label{fig888}}
\end{figure}
\begin{figure}[t]
\centering{\includegraphics[width=0.6\linewidth]{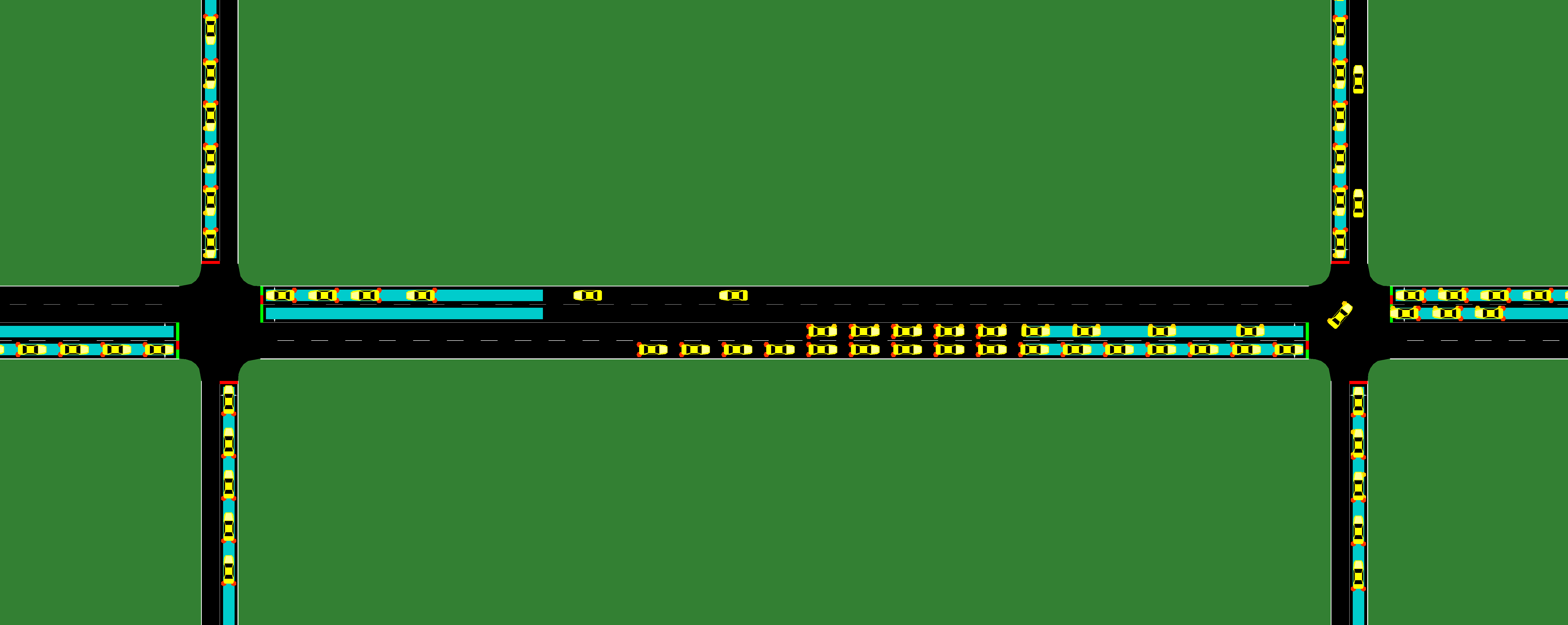}}
\caption{Local view of two adjacent intersections of the realistic road network. \label{fig2222b}}
\end{figure}

We take the road network in some areas of Xi'an as the prototype of the real road network to design a TSC simulator based on SUMO, which has 49 signalized intersections on the real road network.
Fig.~\ref{fig888} and Fig.~\ref{fig2222b} show the overall road network view and a local view of two adjacent intersections, respectively. The cars driving on the road network have the following properties: the length is $5m$, the acceleration is $5m/s$, and the deceleration is $10m/s$.  As for the setting of signal agents' action time interval $\Delta t$, as discussed in \cite{chu2019multi}, if $\Delta t$ is too long, signal agent will not be adaptive enough, if $\Delta t$ is too short, the agent decision will not be delivered on time due to computational cost and communication latency, and it may be unsafe since the action is switched too frequently. Some recent works suggested $\Delta t= 10s, t_y = 5s$ \cite{aslani2017adaptive}, $\Delta t= 5s, t_y = 2s$\cite{chu2019multi}. We adopt the latter setting in the simulator to ensure that each signal agent is more adaptive.

In order to evaluate the robustness and optimality of algorithms in  a challenging TSC scenario, we design intensive, stochastic, time-variant traffic flows to simulate the peak-hour traffic, instead of fixed congestion levels  in the simplified TSC simulator.
The simulation time of each episode is $60min$ and we set up four traffic flow groups.
Specifically, four traffic flow groups are generated as multiples of ``unit" flows $1100veh / hr$, $660veh / hr$, $920veh / hr$, and $552veh / hr$. The first two traffic flows are simulated during the first $40min$, as $[0.4, 0.7, 0.9, 1.0, 0.75, 0.5, 0.25]$ unit flows with $5min$ intervals, while the last two traffic flows are generated during a shifted time window from $15min$ to $55min$, as $[0.3, 0.8, 0.9, 1.0, 0.8, 0.6, 0.2]$ unit flows with $5min$ intervals.

\section{Numerical Experiments and Discussions}\label{sec5}

\subsection{Implementation Details of Algorithms}\label{subsec5.1}
In order to analyze the performance of the proposed algorithm, we compared it with several popular RL methods in the same traffic scenarios. Details of the implementation of Co-DQL  and the other  methods  are described as follows:

Co-DQL: The  procedure described in Section~\ref{subsec3.2} is implemented. Multilayer fully connected neural network is used to approximate the Q-function of each agent. We use the ReLU-activation between hidden layers, and transform the final output of Q-network with it. All agents share the same Q-network, the shared Q-network takes an agent embedding as input and computes Q-value for each candidate action. We also feed in the action approximation $\overline{a}_{k}$ and sharing joint state $\hat{s}_k$. We use the Adam optimizer with a learning rate of 0.0001.
The discounted factor $\gamma$ is set to 0.95, the mini-batch size is 1024, and the reward allocation factor $\alpha$ is set to $1/n$, where $n$ represents the number of neighbor agents. The size of replay buffer is $5 \times 10^{5}$ and $\tau=0.01$ for updating the target networks. The network parameters will be updated once an episode samples are added to the replay buffer.

Multi-Agent A2C (MA2C): The start-of-the-art MARL (decentralized) algorithms for large-scale TSC. The hyper-parameters of the algorithm in the experiment are basically consistent with the original one \cite{chu2019multi}.

Independent Q-learning (IQL): It has almost the same hyper-parameters settings as Co-DQL. And the network architecture is identical to Co-DQL, except a mean action and sharing joint state are not fed as an addition input to the Q-network.

Independent double Q-learning (IDQL): The parameter setting of this method is almost the same as that of independent Q-learning. The main difference is that the double estimators are used when calculating the target value.

Deep deterministic policy gradient (DDPG): This is an off-policy algorithm too. It consists of two parts: actor and critic. Each agent is trained with DDPG algorithm and we share the critic among all agents in each experiment and all of the actors are kept separate. It uses the Adam optimizer with a learning rate of 0.001 and 0.0001 for critics and actors respectively. The settings of other parameters are  the same as those of Co-DQL.

It is noteworthy that all the hyper-parameter settings of all algorithms may affect the performance of the algorithm to a certain extent.

\subsection{Experiments in The Simplified TSC Simulator}\label{subsec5.2}
 By training and evaluating the proposed method in different traffic scenarios, we can demonstrate that the proposed method is promising.
Next, we will analyze the performance of the algorithms in three scenarios.
\begin{figure*}[!t] 
    \centering
	  \subfloat[]{\includegraphics[width=0.30\linewidth]{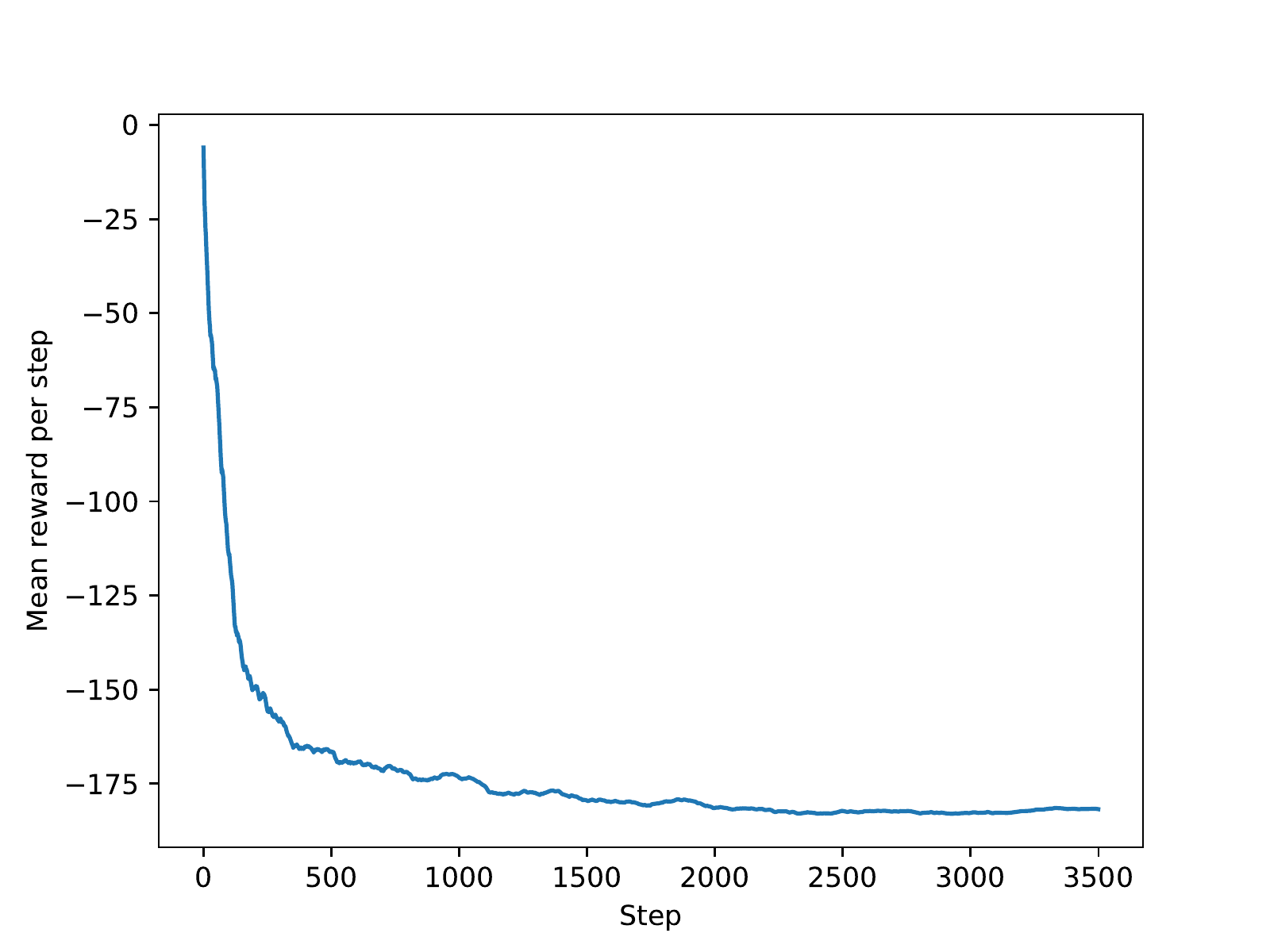}} \label{fig33a}\quad\quad
	  \subfloat[]{\includegraphics[width=0.30\linewidth]{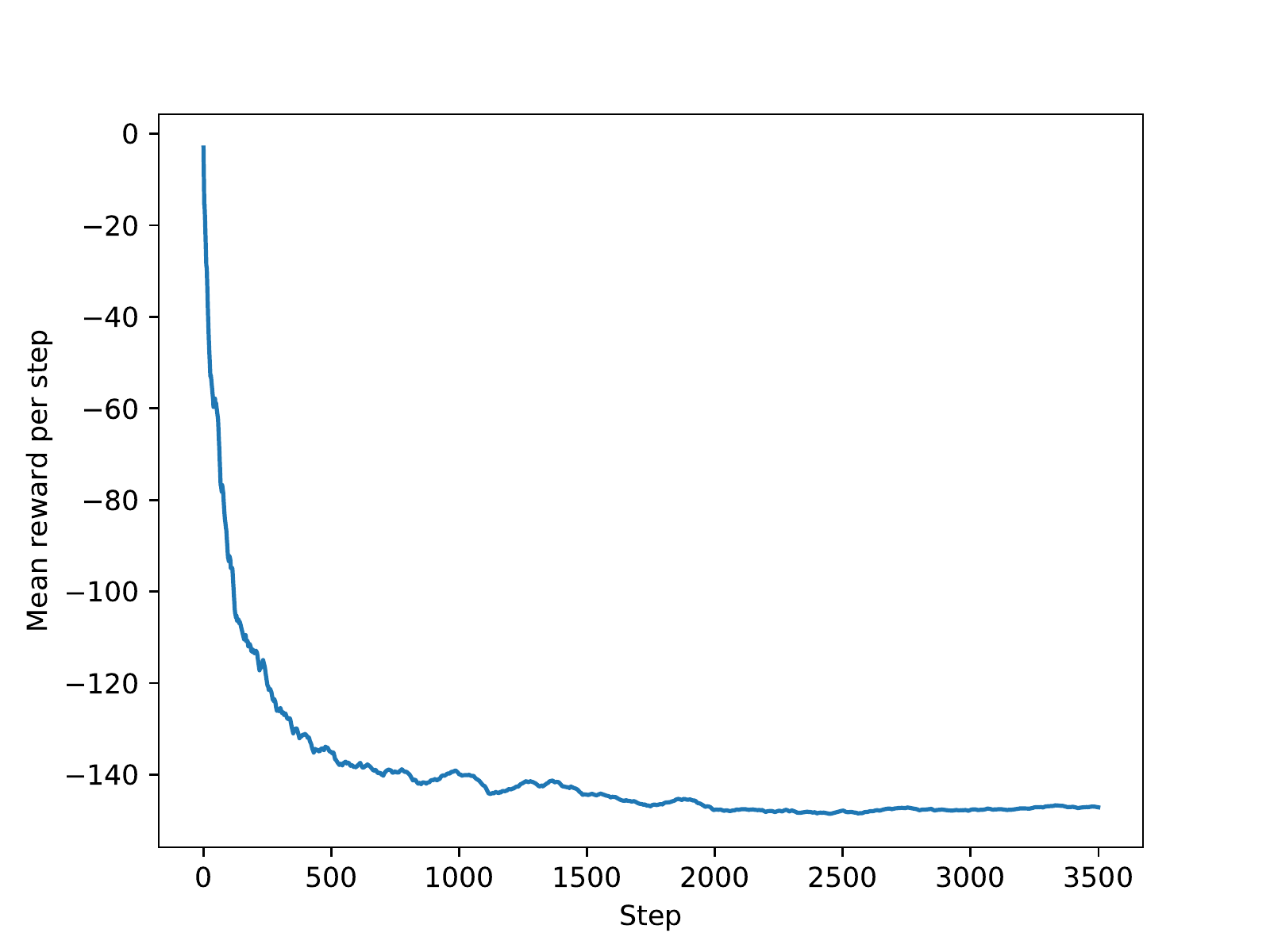}} \label{fig33b}\quad\quad
	  \subfloat[]{\includegraphics[width=0.30\linewidth]{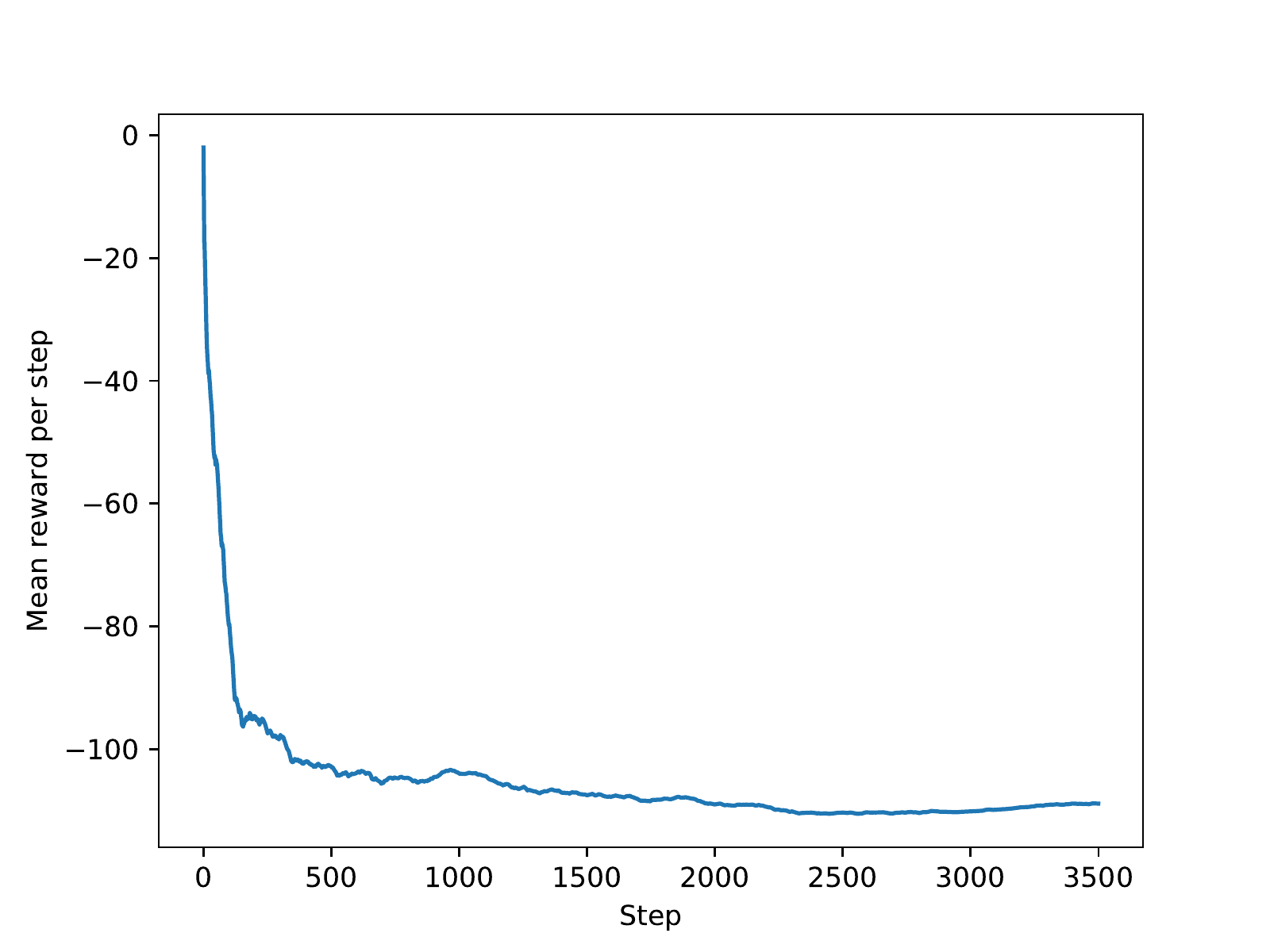}} \label{fig33c}\hfill
	  \caption{Illustration of mean reward change curve of signal agents using random strategy in various traffic flows scenarios. (a) global random traffic flow scenario, (b) double-ring traffic flow scenario, and (c) four-ring traffic flow scenario. At the beginning, there are fewer vehicles running in the simulator. As vehicles are added to the simulator at each time step, there are more and more vehicles in the road network, and the mean reward of signal agents is getting smaller. As the vehicle arriving at the destination will be removed from the simulator, the level of congestion will reaches a stable range. We intercept a certain number of simulator states after stabilization as the selectable initial state of the simulator when training and evaluating MARL models.}
	  \label{fig33}
	\end{figure*}

\subsubsection{global random traffic flow}\label{subsec5.2.1}
\begin{figure}[!t]
\centering{\includegraphics[width=8cm,height=5cm]{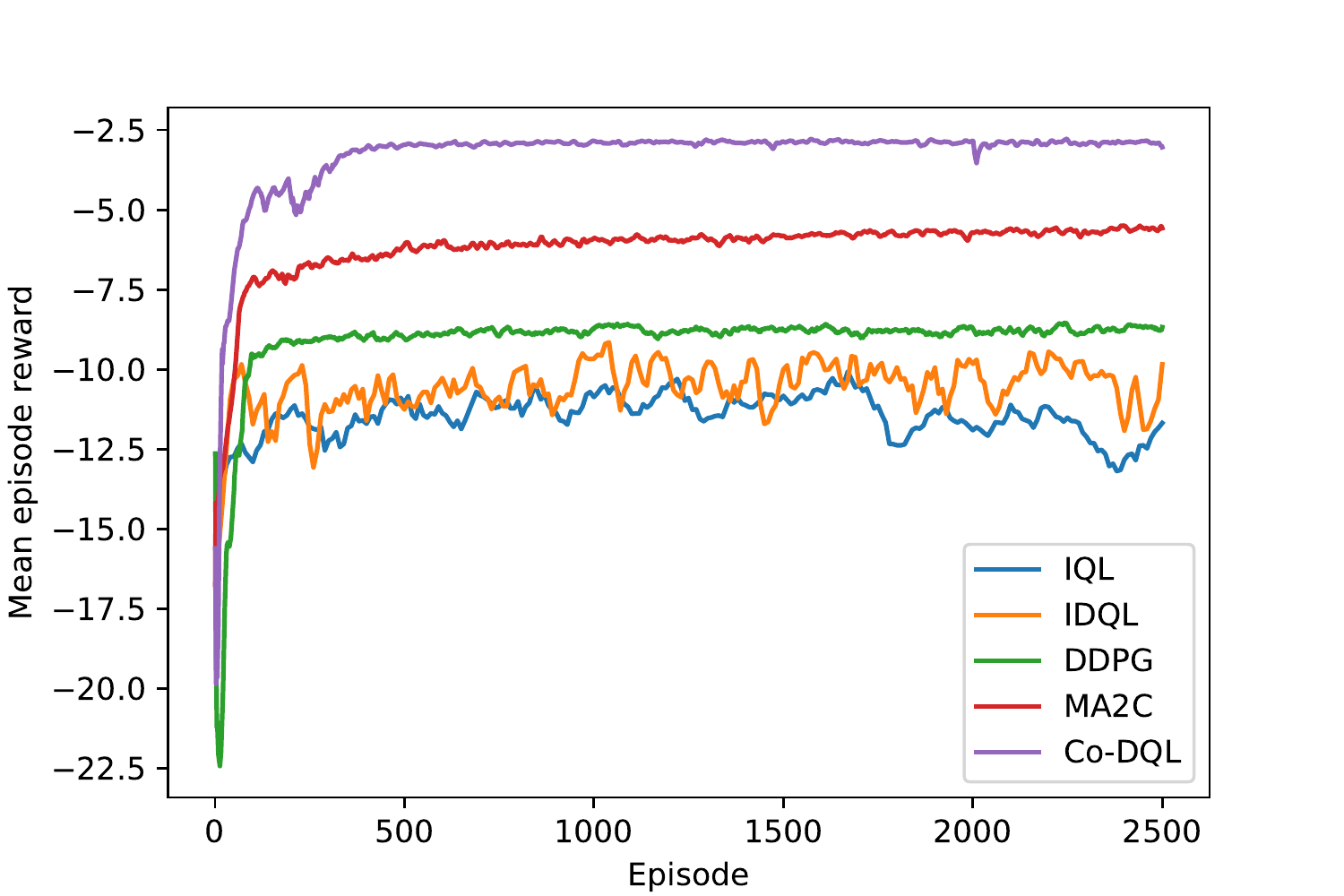}}
\caption{Reward curve of signal agent during training in the global random traffic flow scenario.\label{fig44}}
\end{figure}

As shown in the Fig.~\ref{fig33} (a), under the condition that signal agents adopt a random strategy, the mean reward reaches stable after about 2000  time steps, which means that the traffic flow of the simulator reaches a stable state too.
In order to ensure the diversity of training samples and avoid over-fitting some traffic flow states as far as possible, we record 10 discrete simulator states (i.e. vehicle position, driving status, signal status) after 2000 time steps as random seeds and it will be used to train and evaluate these methods.
In the global random traffic flow, we set the number of new vehicles added at each time step to 5, which corresponds to a high level of traffic congestion.

\emph{Result Analysis}. We run 2500 episodes for training all five models, and regularly save the trained models. The mean reward curve of signal agents is shown in Fig.~\ref{fig44}.
It can be seen from the figure that IQL suffers the lowest training performance. Although IDQL is just slightly better than IQL, the results tend to indicate that over-estimation of action-value function will damage the performance of signal control and that using double estimators can improve the performance to a certain extent.
Interestingly, the performance of DDPG is better than that of IDQL, it may be due to the advantages of actor-critic structure.
Although MA2C and Co-DQL both have more robust learning ability, Co-DQL greatly outperforms all the other methods. Co-DQL uses mean field approximation to directly model the strategies of other agents, thus it can learn a good cooperative strategies and maximize the total reward of the road network.
\begin{table}[!t]
\renewcommand{\arraystretch}{1.3}
\caption{Model Performance in Global Random Traffic Flow Scenario}
\label{tab2}
\centering
\begin{tabular}{|c|c|c|}
\hline
Method & Average Delay Time [t] & Mean Episode Reward \\
\hline
IQL  & $148.500(\pm8.963)$ & $-11.602(\pm0.700)$ \\ 
\hline
IDQL  & $131.854(\pm7.534)$ & $-10.301(\pm0.589)$ \\
\hline
DDPG  & $111.057(\pm0.606)$ & $-8.676(\pm0.047) $  \\
\hline
MA2C  & $71.553(\pm0.5812)$ & $-5.590(\pm0.045) $  \\
\hline
Co-DQL  & $36.981(\pm0.509)$ & $-2.889(\pm0.040)$  \\
\hline
\end{tabular}
\vspace{5pt}

\emph{t} means discrete time step.
\end{table}

For each algorithm, the best model obtained in the training process is used to test in this scenario.
We evaluate all of them over 100 episodes.
Table~\ref{tab2} shows the results of evaluation. Average delay time is calculated from the total delay time of vehicles in the road network during an episode. The standard deviation is given in parentheses after the mean value. Co-DQL greatly reduces the average delay time compared with the other methods. The test results are basically consistent with the trained model performance, which shows the validity of our trained model.
\subsubsection{double-ring traffic flow }\label{subsec5.2.2}
\begin{table}[!t]
\renewcommand{\arraystretch}{1.3}
\caption{Model Performance in Double-Ring Traffic Flow Scenario}
\label{tab3}
\centering
\begin{tabular}{|c|c|c|}
\hline
Method & Average Delay Time [t] & Mean Episode Reward \\
\hline
IQL  & $89.838(\pm5.645)$  & $-5.615(\pm0.353)$ \\
\hline
IDQL  & $83.921(\pm2.273)$   & $-5.245(\pm0.142)$ \\
\hline
DDPG  & $86.581(\pm1.182)$   & $-5.411 (\pm0.074)$  \\
\hline
MA2C  & $58.857(\pm0.779)$  & $-3.679(\pm0.049) $  \\
\hline
Co-DQL  & $26.046(\pm0.751)$   & $-1.628(\pm0.047)$  \\
\hline
\end{tabular}
\vspace{5pt}

\emph{t} means discrete time step.
\end{table}
\begin{figure}[!t]
\centering{\includegraphics[width=8cm,height=5cm]{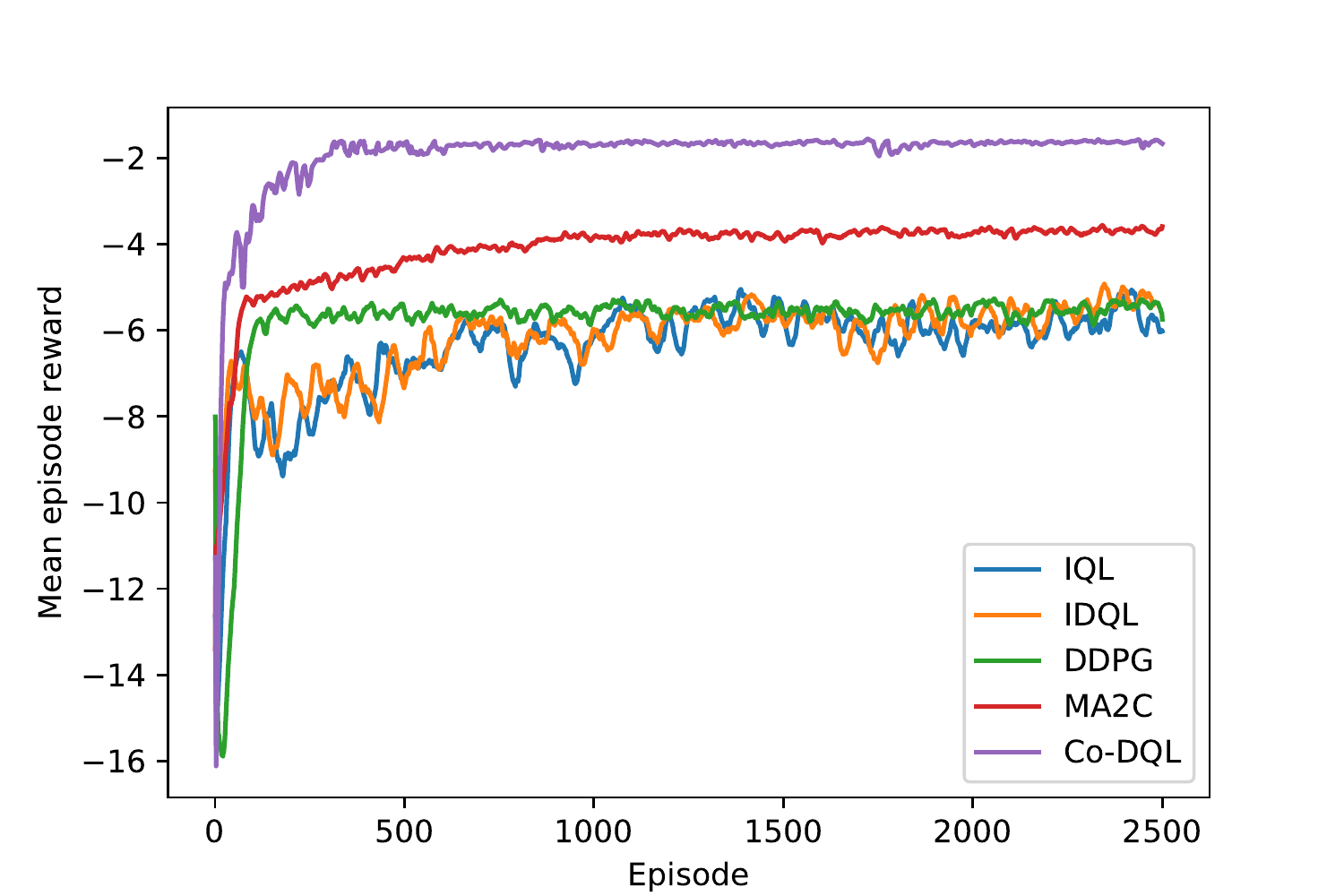}}
\caption{Reward curve of signal agent during training in the double-ring traffic flow scenario.\label{fig55}}
\end{figure}
Fig.~\ref{fig33} (b) shows the mean reward curve of agents using random strategies in double-ring traffic flow scene. Similarly, 10 simulator states are selected as seeds. In this scenario, we set the number of new vehicles added to the network at each time step to 4, which corresponds to a medium level of traffic congestion. The other parameters of the simulator are the same as those in Section~\ref{subsec5.2.1}.

\emph{Result Analysis}. Similarly, we train all the models in this scenario and save the model with the best training performance. The mean reward curve is shown in Fig.~\ref{fig55}. As expected, the training performance of Co-DQL method still outperforms all the other methods.
In addition, mainly due to the information transfer among agents, MA2C can obtain better training results in contrast to the independent agent methods, that is, IQL and IDQL.
However, although the convergence rates of DDPG, IQL and IDQL are different, the final training results are basically similar.
This may be because the problem of double-ring traffic flow is relatively simple, so these three methods can achieve relatively consistent results. In this scenario, the evaluation results are shown in Table~\ref{tab3}. Co-DQL  can obtain shorter average delay time and smaller standard deviations than other methods.

\subsubsection{four-ring traffic flow}\label{subsec5.2.3}
\begin{figure}[!t]
\centering{\includegraphics[width=8cm,height=5cm]{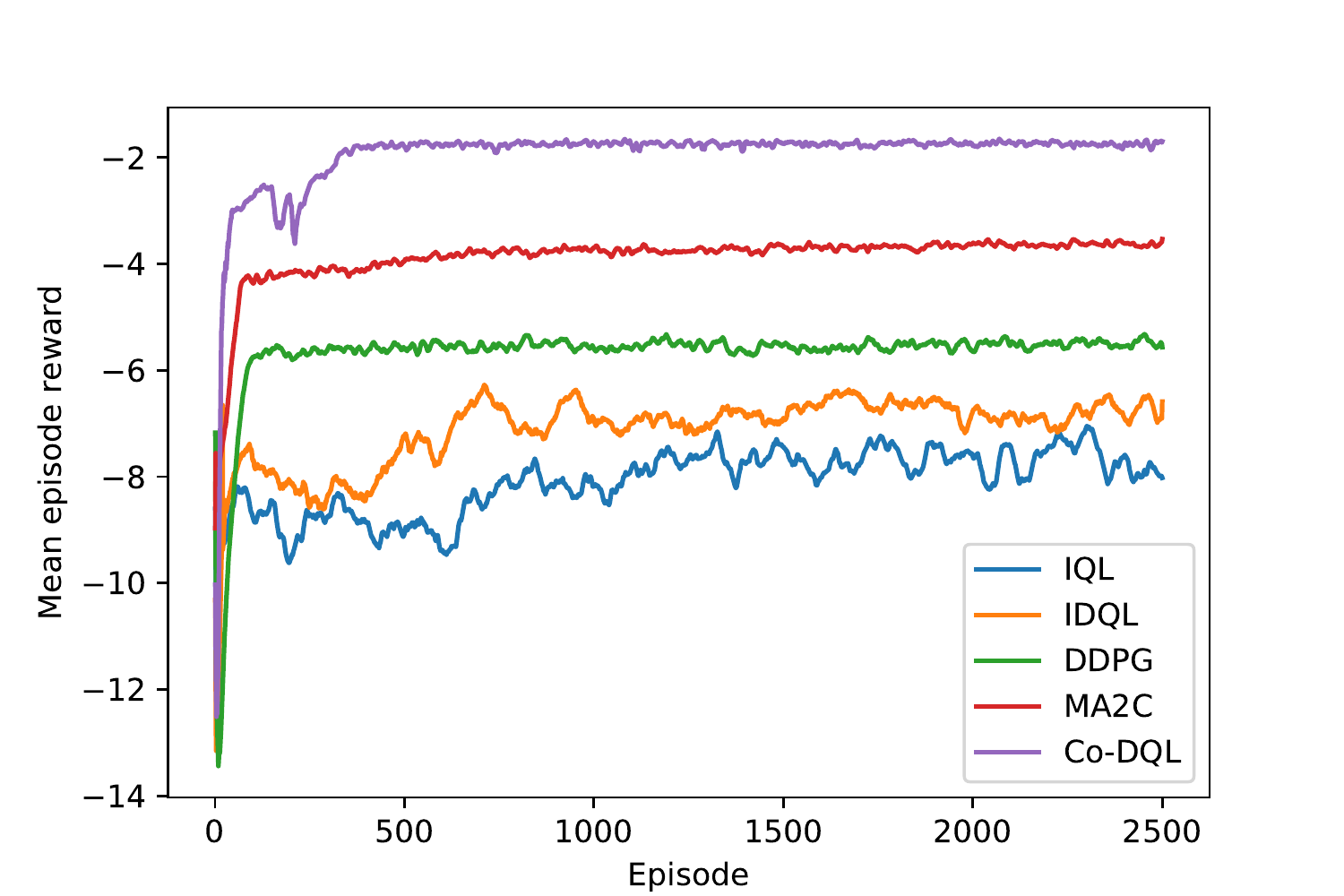}}
\caption{Reward curve of signal agent during training in the four-ring traffic flow scenario.\label{fig66}}
\end{figure}

\begin{table}[!t]
\caption{Model Performance in Four-Ring Traffic Flow Scenario}
\label{tab4} \centering
\begin{tabular}{|c|c|c|}
\hline
Method & Average Delay Time [t]  & Mean Episode Reward \\
\hline
IQL  & $168.526(\pm2.673)$  & $-7.900(\pm0.125)$ \\
\hline
IDQL  & $143.986(\pm3.761)$  & $-6.749(\pm 0.176)$ \\
\hline
DDPG  & $116.823(\pm1.610)$  & $-5.476 (\pm 0.075)$  \\
\hline
MA2C  & $77.633(\pm0.660)$  & $-3.639(\pm0.031) $  \\
\hline
Co-DQL  & $37.174(\pm0.937)$ & $-1.743 (\pm 0.044)$ \\
\hline
\end{tabular}
\vspace{5pt}

\emph{t} means discrete time step.
\end{table}
Select seeds for the four-ring traffic flow according to the curve of Fig.~\ref{fig33} (c). In order to simulate traffic conditions with low level of traffic congestion, we set the number of new vehicles added to the road network at each time step to 3. The other parameters of the simulator are set in the same way as other scenarios.

\emph{Result Analysis}. The training curve in this scenario is shown in Fig.~\ref{fig66}, and the test results are shown in Table~\ref{tab4}.
In this scenario, the training performance of IDQL is significantly better than that of IQL without double estimators.
The learning process of Co-DQL and MA2C is relatively stable and the standard deviation in the evaluation process is smaller than that of IQL, IDQL  and DDPG, this may be due to that they share information among agents. But ultimately, Co-DQL achieves the shortest average delay time by means of mean field approximation for opponent modeling and local information sharing.

\subsection{Experiment in The More Realistic TSC Simulator}\label{subsec5.3}
\begin{figure}[!t]
\centering{\includegraphics[width=8cm,height=5cm]{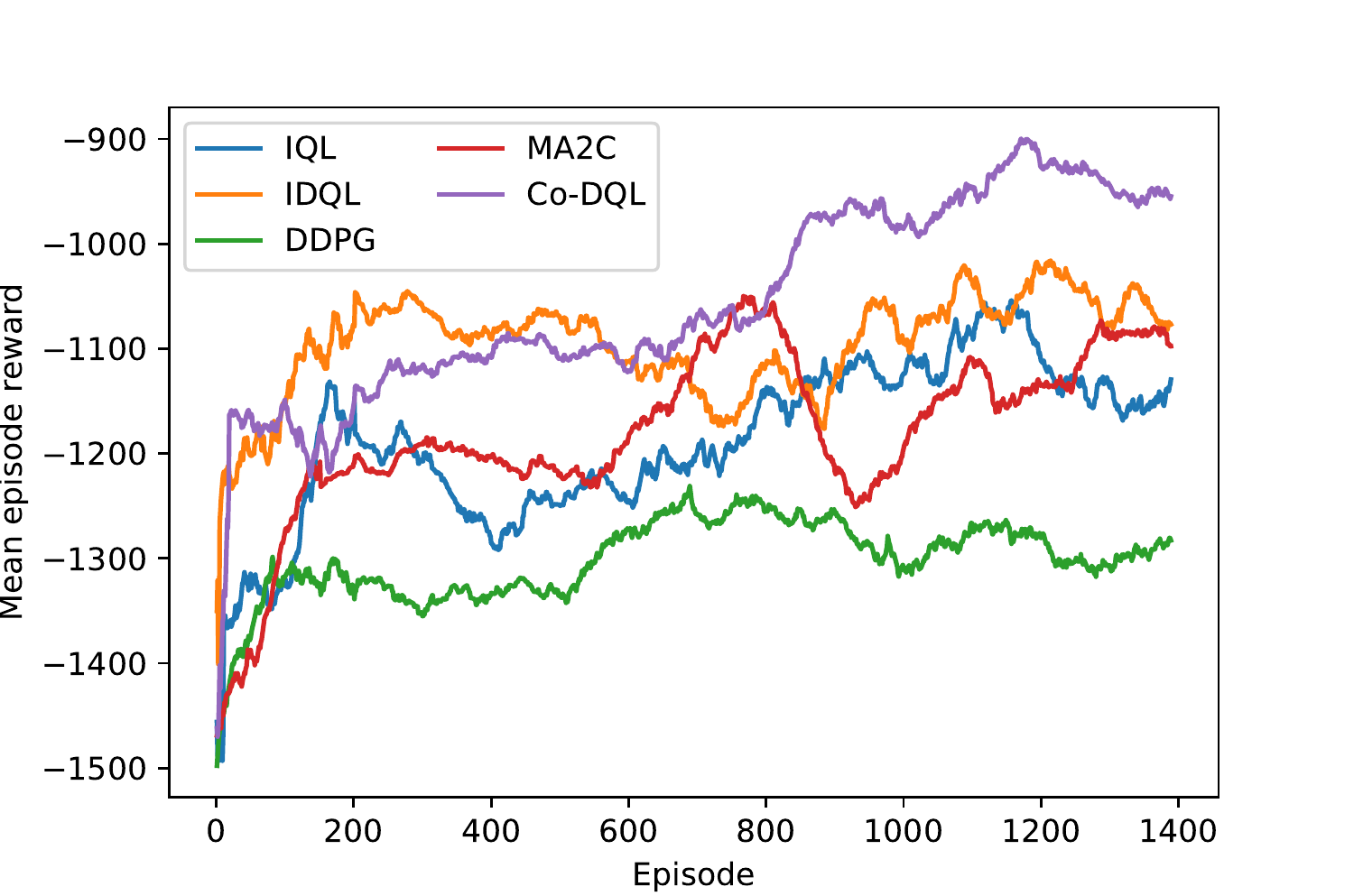}}
\caption{Reward curve of signal agent during training on the real road network with asymmetric geometry.
\label{fig999}}
\end{figure}

\begin{table*}[!t]
\caption{Model performance in real road network with asymmetric geometry}
\label{tab55} \centering
\begin{tabular}{|c|c|c|c|c|c|}
\hline
Metrics & IQL  & IDQL & DDPG & MA2C & Co-DQL\\
\hline
Mean Episode Reward  & $-1160.52(\pm190.62)$  & $-1076.34(\pm193.53)$ & $-1296.68(\pm140.87)$  & $-1108.52(\pm83.41)$ & $-930.38(\pm87.45)$\\
\hline
Avg. Vehicle Speed [m/s]  & $4.33(\pm0.49)$  & $4.53(\pm 0.45)$ & $3.81(\pm0.35)$  & $4.65(\pm0.23)$ & $5.35(\pm0.26)$\\
\hline
Avg. Intersection Delay [s/veh]  & $28.52(\pm5.55)$  & $27.17(\pm 5.42)$ & $33.01(\pm4.50)$ & $27.98(\pm2.57)$ & $20.31(\pm2.55)$ \\
\hline
Avg. Queue Length [veh]  & $10.03(\pm2.05)$  & $10.01(\pm2.12) $ & $12.53(\pm1.87)$  & $9.80(\pm1.21)$ & $7.51(\pm1.29)$ \\  
\hline
Trip Delay[s]  & $278.38(\pm35.35)$ & $254.20 (\pm 46.04)$ & $311.34(\pm30.01)$  & $253.23(\pm14.01)$ & $177.73(\pm16.70)$\\
\hline
Trip Arrived Rate  & $0.74(\pm0.08)$ & $0.80 (\pm 0.07)$ & $0.57(\pm0.05)$  & $0.79(\pm0.03)$ & $0.91(\pm0.03)$\\
\hline
\end{tabular}
\vspace{5pt}
\end{table*}
\emph{Experiment Settings}. Experiment with the simulator setup described in Section~\ref{subsec4.2.2}. Regarding MDP setting, the regularization rate  $\beta$ in reward is set to $0.2veh/s$, and the regularization factors of $wave$, $wait$, and reward are $5veh$, $100s$, and $2000veh$. Here, we train all MARL models around 1400 episodes given episode horizon $T=720$ steps, then evaluate the trained models over 10 episodes.

\emph{Result Analysis}. The mean episode reward curve during the training in this scenario is shown in Fig.~\ref{fig999}. In this challenging scenario, DDPG suffers from the worst training performance, which may be due to the time-varying traffic flow leading to a large variance of critics, so it can not effectively guide the learning of actors. Surprisingly, although the training performance of MA2C is much better than that of DDPG, it has no obvious advantage over
IQL and IDQL. This may be due to MA2C is more sensitive to the number of agents, and the setting of many hyper-parameters involved is also a big challenge. As expected, Co-DQL achieves the best training performance.

In this more realistic simulator, we have the opportunity to consider more traffic metrics than in the simplified one. Table~\ref{tab55} shows the evaluation results using ten different random seeds, in which Avg. Vehicle Speed is calculated by dividing the total distance traveled by the driving time, Avg. Intersection Delay is calculated by dividing the total delay time of each intersection by the total number of vehicles at the intersection, and Avg. Queue Length is calculated by the queue length of each time period, and Trip Delay  refers to the total delay time of vehicles in the driving process, and Trip Arrived Rate is calculated by dividing the number of vehicles that have arrived at the destination before the end of the simulation by the total number of vehicles.
 The comparison results in terms of  all measures are relatively consistent.

 According to the results, over-estimation makes a difference in the performance between IQL and IDQL, and the use of double estimators in IDQL always has a slight advantage over IQL according to most of the measurements. Compared with IQL, IDQL and DDPG, Co-DQL and MA2C show more robust test performance (less standard deviation), which shows that information sharing among agents brings benefits to cooperation among agents, and Co-DQL achieves the best average performance with respect to  multiple measures, which shows the advantage of mean field approximation in agent behavior modeling.

\subsection{Discussions}\label{subsec5.4}
\begin{figure}[!t]
\centering{\includegraphics[width=8cm,height=5cm]{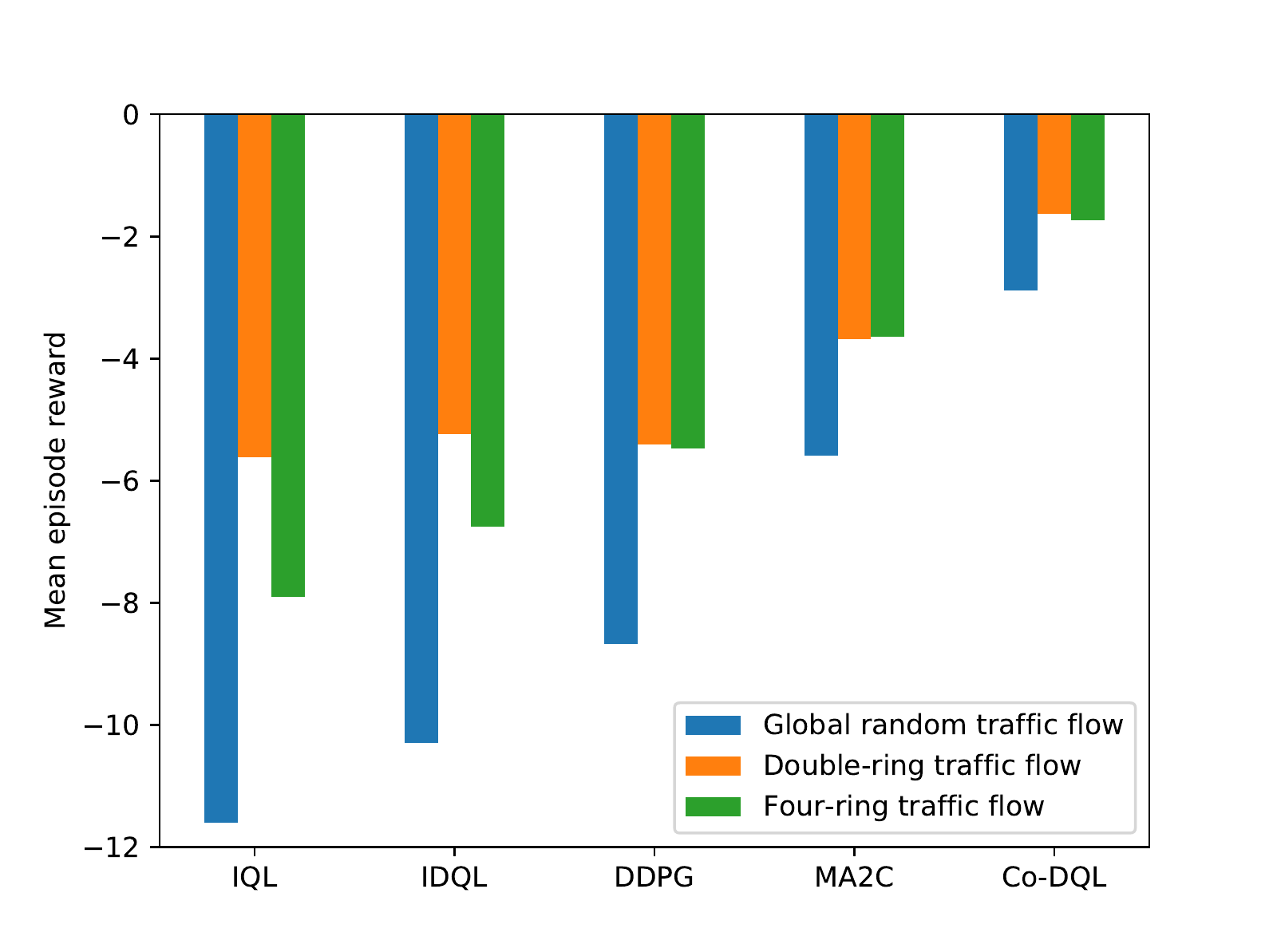}}
\caption{Mean episode reward comparison for testing the corresponding model in different traffic flow scenarios of simplified TSC. \label{fig77}}
\end{figure}
\begin{figure}[!t]
\centering{\includegraphics[width=8cm,height=5cm]{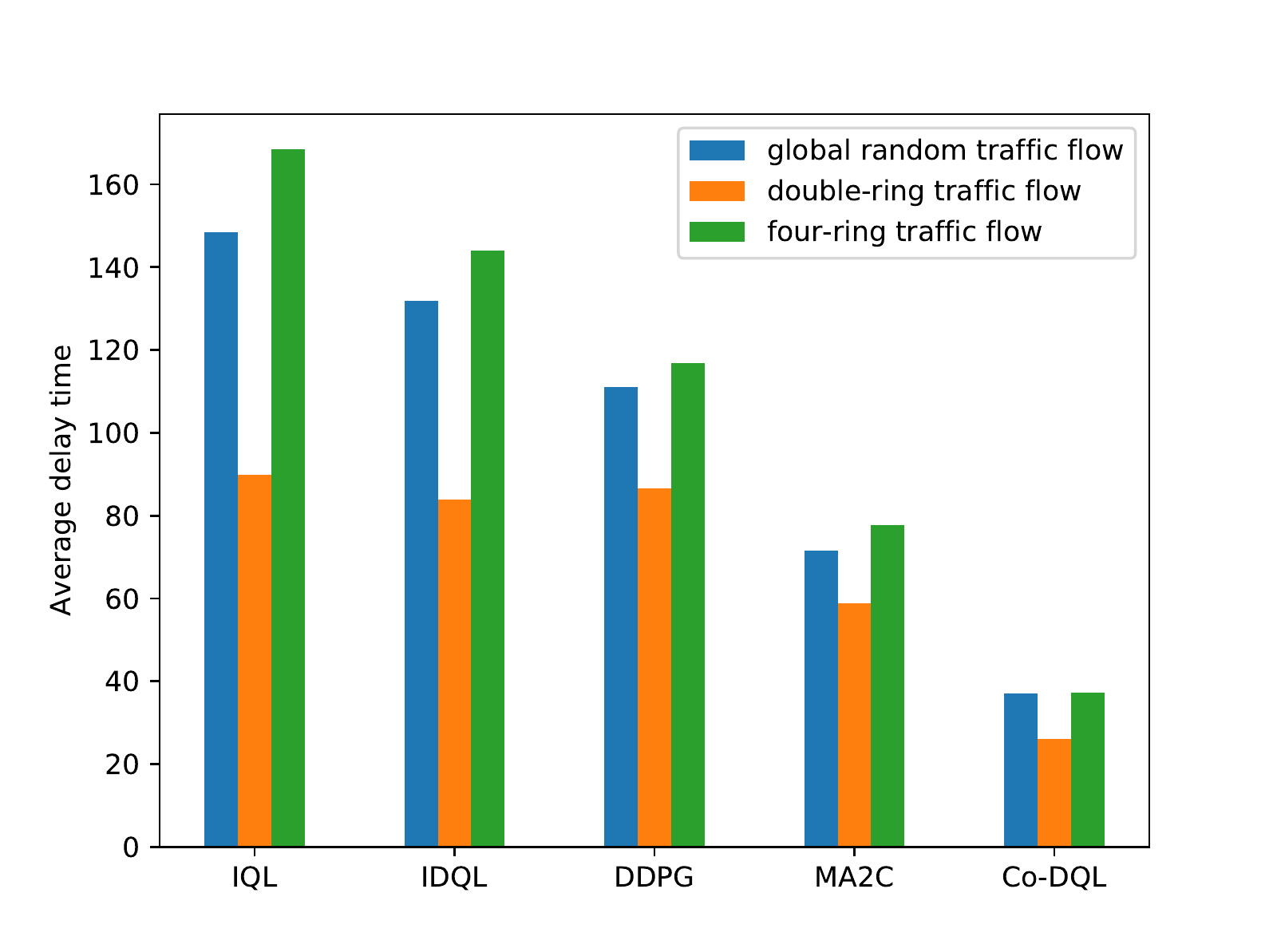}}
\caption{Mean delay time comparison for testing the corresponding model in different traffic flow scenarios of simplified TSC. \label{fig88}}
\end{figure}

Firstly, we discuss the performance of different algorithms in three traffic flow scenarios with the simplified MDP setting.
As seen from Fig.~\ref{fig77} (blue bar), all methods have a smaller mean episode reward in the global random traffic flow scenario than in the other scenarios, which is due to the highest level of traffic congestion and the largest traffic volume in this scenario.
According to Fig.~\ref{fig88} (green bar), although the mean episode reward level of each evaluation model in the four-ring traffic flow scenario is moderate, the number of vehicles in this scenario is small, which may lead to greater average vehicle delay.
Although the traffic volume of double-ring traffic flow scenario is larger than that of four-ring traffic flow scenario, the evaluation results in the former scenario (orange bar) are even slightly better than the latter (green bar), regardless of the mean episode reward of agent or the average waiting time of vehicle.
The analysis shows that the double-ring traffic flow scenario just needs the cooperation between two groups of agents, namely, the cooperation of signal agents in the inner and outer loop, while the four-ring traffic flow scenario needs the collaboration among  four groups, so the cooperation task of signal agents  in the latter may be more complex.

Experimental results on multiple scenarios show that the performance of the algorithm with double estimator is always better than that without double estimator.
Compared with the simplified situation, in the more realistic case, MA2C does not achieve the desired performance. Co-DQL can still get more training reward and better evaluation performance than the state-of-the-art decentralized MARL algorithms. In addition, we also conducted an experiment on a $7\times7$ grid road network simulator, the setting and results about the experiment are shown in the supplementary materials. One can notice that Co-DQL can achieve the best results.

In the society of RL, a hot topic is how to use it in reality. Because the uncertainty brought by the exploration behavior of RL model in the training process is a potential safety hazard for the application of TSC in practice, the training stage of our model is completed in a TSC simulator in a similar way as most RL models \cite{wei2018intellilight}\cite{casas2017deep}\cite{chu2019multi}, and the model deployed in reality is generally the model trained in the simulator. Although there is a gap between the simulator and the real environment, simulation to reality (sim2real)\cite{sadeghi2018sim2real}, as a branch of RL, has been widely studied in order to bridge the gap.

\section{Conclusion}\label{sec6}
When to design a MARL algorithm, a critical challenge is how to make the agents efficiently cooperate, and one of the breach of realize is properly estimating the Q values and sharing local information among agents. Along this line of thought, this paper developed Co-DQL, which takes advantage of some important ideas studied in the literature. In more detail, Co-DQL employs an independent double Q-learning method based on double estimators and the UCB exploration, which can eliminate the over-estimation of traditional independent Q-learning while ensuring exploration. It adopts mean field approximation to model the interaction among agents so that agents can learn a better cooperative strategy.
In addition, we presented a reward allocation mechanism and a local state sharing method. Based on the characteristics of TSC, we gave the details of the algorithmic elements. To validate the performance of the proposed algorithm, we tested Co-DQL on various
traffic flow scenarios of TSC simulators. Compared with several state-of-the-art MARL algorithms (i.e., IQL,  IDQL, DDPG and MA2C), Co-DQL can achieve promising results.

In the future, we hope to further test Co-DQL on the real city road network, and  we will consider other approaches on large-scale MARL such as  hierarchical architecture \cite{tan2019cooperative} \cite{vezhnevets2017feudal}.
In addition, note that the local optimization of an agent's reward (throughput) may reduce the neighboring agents' rewards in a nonlinear way. Such a nonlinearity is typical in traffic flow. Using the linear weighted function with a constant $\alpha$ may not fully capture the nonlinear throughput relationship between neighboring intersections. Also, each agent's reward will appear multiple times, depending on the number of connected neighboring intersections. For instance, an intersection with five legs will receive more weights than a three-leg
intersection that may cause a biased optimal solution.
Hence, it may be interesting to further study on the reward allocation mechanism.

So far, a great number of methods have been proposed for TSC, such as max pressure\cite{varaiya2013max}, cell transmission model\cite{timotheou2014distributed}. It may be interesting to comprehensively compare these methods. Furthermore, parameters heavily affect the performance of an algorithm, it is interesting to study how to automatically adjust them so as to achieve the promising quality. Finally, it may be interesting to study our method on the other MDP settings for TSC problem.

\section*{Acknowledgment}
 This work was supported by the National Natural Science Foundation of China (No. 61973244, 61573277).

\ifCLASSOPTIONcaptionsoff
  \newpage
\fi


\bibliographystyle{IEEEtran}
\bibliography{mybibfile}

%
%


\end{document}